\documentclass{article}

\usepackage{microtype}
\usepackage{graphicx}
\usepackage{subcaption}
\usepackage{IEEEtrantools}
\usepackage{booktabs} % for professional tables

\usepackage{hyperref}

\usepackage{xcolor}
\usepackage{tcolorbox}
\tcbuselibrary{skins, breakable}
\usepackage{pifont}
\newcommand{\xmark}{\ding{55}}

\usepackage{multirow}

\usepackage[preprint]{icml2026}

\usepackage{amsmath}
\usepackage{amssymb}
\usepackage{mathtools}
\usepackage{amsthm}
\usepackage{color, colortbl}
\usepackage{makecell}

\usepackage[capitalize,noabbrev]{cleveref}

\theoremstyle{plain}
\newtheorem{theorem}{Theorem}[section]
\newtheorem{proposition}[theorem]{Proposition}
\newtheorem{lemma}[theorem]{Lemma}

\theoremstyle{definition}

\theoremstyle{remark}

\DeclareMathOperator*{\argmin}{arg\,min}

\renewcommand{\algorithmicrequire}{\textbf{Input:}}
\renewcommand{\algorithmicensure}{\textbf{Output:}}

\definecolor{Highlight}{rgb}{0.89,0.89,0.94}

\usepackage[textsize=tiny]{todonotes}

\icmltitlerunning{ Stabilizing LLM Supervised Fine-Tuning via Explicit Distributional Control}

\begin{document}

\twocolumn[
  \icmltitle{ Stabilizing LLM Supervised Fine-Tuning via Explicit Distributional Control}

  % % It is OKAY to include author information, even for blind submissions: the
  % % style file will automatically remove it for you unless you've provided
  % % the [accepted] option to the icml2026 package.

  % % List of affiliations: The first argument should be a (short) identifier you
  % % will use later to specify author affiliations Academic affiliations
  % % should list Department, University, City, Region, Country Industry
  % % affiliations should list Company, City, Region, Country

  % % You can specify symbols, otherwise they are numbered in order. Ideally, you
  % % should not use this facility. Affiliations will be numbered in order of
  % % appearance and this is the preferred way.
  \icmlsetsymbol{equal}{*}

  \begin{icmlauthorlist}
    \icmlauthor{Xinyu Wang}{equal,sch}
    \icmlauthor{Changzhi Sun}{equal,comp}
    \icmlauthor{Yuanbin Wu}{sch}
    \icmlauthor{Xiaoling Wang}{sch}
  \end{icmlauthorlist}

  \icmlaffiliation{comp}{TeleAI, Shanghai, China}
  \icmlaffiliation{sch}{School of Computer Science and Technology, East China Normal University, Shanghai, China}

  \icmlcorrespondingauthor{Xiaoling Wang}{xlwang@cs.ecnu.edu.cn}

  % \icmlkeywords{Machine Learning}
  % \printAffiliations

]

% this must go after the closing bracket ] following \twocolumn[ ...

% This command actually creates the footnote in the first column listing the
% affiliations and the copyright notice. The command takes one argument, which
% is text to display at the start of the footnote. The \icmlEqualContribution
% command is standard text for equal contribution. Remove it (just {}) if you
% do not need this facility.

% Use ONE of the following lines. DO NOT remove the command.
% If you have no special notice, KEEP empty braces:
% \printAffiliationsAndNotice{}  % no special notice (required even if empty)
% Or, if applicable, use the standard equal contribution text:
% \printAffiliationsAndNotice{\icmlEqualContribution}
\printAffiliationsAndNotice{}

\section{abstract}
Post-training large language models (LLMs) often suffers from catastrophic forgetting, where improvements on a target objective degrade previously acquired capabilities. Recent evidence suggests that this phenomenon is primarily driven by excessive distributional drift during optimization.
Motivated by this perspective, we propose Anchored Learning, a simple framework that explicitly controls distributional updates during offline fine-tuning via a dynamically evolving \emph{moving anchor}.
Instead of matching a fixed reference distribution, the anchor interpolates between the current model and a frozen reference to construct an intermediate target that the model distills toward, transforming global fine-tuning into a sequence of local trust-region updates in distribution space.
Theoretically, we prove this anchor-based update admits a linear KL-divergence upper bound per iteration, ensuring a stable transition between model distributions.
Extensive experiments on iGSM, MedCalc, and IFEval show that Anchored Learning consistently lies on the Pareto frontier of gain–stability trade-offs, achieving near-optimal performance improvements while substantially reducing degradation compared to strong baselines.
For example, while standard SFT suffers from over 53\% performance degradation on iGSM and MedCalc, Anchored Learning slashes this drop to under 5\% while maintaining near-optimal gains (e.g., 75.2\% on iGSM).
% \begin{abstract}
% \input{000abstract}
% \end{abstract}

\section{Introduction}
\label{sec:intro}
Post-training large language models (LLMs) to adapt them to new objectives is a central step in modern model development, enabling improved task performance, alignment, and domain specialization\cite{labrak-etal-2024-biomistral,peng2024ecellm}. 
However, post-training often induces \emph{catastrophic forgetting}, where gains on the target task come at the expense of degraded performance on previously acquired capabilities~\cite{kirkpatrick2017overcoming,zhai2023investigating}. 
Understanding the mechanism behind this phenomenon is critical for designing stable and efficient fine-tuning algorithms.

Recent empirical and theoretical studies converge on a common conclusion: catastrophic forgetting is primarily driven by excessive \emph{distributional drift} during optimization. 
\emph{RL’s Razor} shows that although reinforcement learning (RL) and supervised fine-tuning (SFT) often achieve comparable target-task performance, RL preserves prior capabilities significantly better, with the degree of forgetting strongly correlated with the KL-divergence between the fine-tuned and base policy, reflecting an implicit bias of on-policy RL toward KL-minimal solutions among all policies that solve the objective~\cite{shenfeld2025rl}.
Complementarily, the robustness of RL has been attributed to the mode-seeking behavior induced by on-policy data rather than to algorithmic components such as KL regularization or advantage estimation, with approximately on-policy updates already substantially mitigating forgetting~\cite{chen2025retaining}. 
Together, these findings indicate that post-training stability is governed by how tightly the optimization trajectory controls distributional shift and mode preservation.

These insights, however, remain largely descriptive: while on-policy RL implicitly constrains distributional drift, it does so through costly data collection and complex optimization pipelines, and offers limited direct control over the optimization trajectory.
In contrast, standard SFT lacks any explicit mechanism for regulating how far and in which direction the model is allowed to move in distribution space, often leading to abrupt behavioral shifts~\cite{lin2025sft,pareja2025unveiling}.
Existing approaches based on static KL regularization, trust-region constraints, or fixed teacher distillation provide only indirect or globally tuned control~\cite{yang2024self,wu2025mitigating,wu2025generalization}, and are poorly aligned with the local geometry of model distributions (Tab.~\ref{tab:method_comparison}).
Therefore, a natural question is \emph{whether a simple and explicit mechanism can enable stable control of distributional updates in a purely offline setting, combining the efficiency of SFT with the stability of on-policy RL}.

\begin{table*}[t]
\centering
\footnotesize
\setlength{\tabcolsep}{4pt}
\caption{
Comparison of representative post-training paradigms for continual adaptation.
\emph{Dyn.~Target} indicates whether the supervision target is explicitly
constructed to evolve with the current model during optimization.
\emph{Prim.~Superv.} denotes the signal that defines the dominant optimization
objective, rather than the input data source.
\emph{Drift Ctrl.} characterizes how distributional updates are constrained,
either implicitly through sampling or losses, or explicitly via surrogate or
KL-based constraints.
\emph{Ctrl.~Gran.} specifies whether the drift constraint operates locally
(pointwise or per-update) or globally over the parameter or policy space.
\emph{Offline} indicates whether training relies solely on a fixed dataset
without online interaction.
Anchored Learning is the only approach that simultaneously admits an explicitly
constructed dynamic target, locally bounded drift control, and a fully offline
training regime.
% Abbreviations: Dyn.~Target = Dynamic Target, Prim.~Superv. = Primary Supervision,
% Drift Ctrl. = Drift Control, Ctrl.~Gran. = Control Granularity.
}
\begin{tabular}{lccccc}
\toprule
\textbf{Method}
& \textbf{Dyn. Target}
& \textbf{Prim. Superv.}
& \textbf{Drift Ctrl.}
& \textbf{Ctrl. Gran.}
& \textbf{Offline} \\
\midrule
Supervised Fine-Tuning
& No
& Labeled Data
& Implicit (Forward KL)
& Global
& Yes \\

Reward-Driven RL
& No
& Reward Signal
& Implicit (Sampling / Exploration)
& Local
& No \\

Surrogate RL (e.g., PPO)
& No
& Reward Signal
& Explicit (Trust-Region / KL)
& Local
& No \\

KL-Regularized Fine-Tuning
& No
& Fixed Reference
& Explicit (KL)
& Global
& Yes \\

Distillation
& No
& Frozen Teacher
& Implicit (KL)
& Global
& Yes \\

\midrule
Anchored Learning (Ours)
& {Yes} 
& {Interpolated Anchor} 
& {Explicit (Bounded)} 
& {Local} 
& {Yes}  \\
\bottomrule
\end{tabular}
\label{tab:method_comparison}
\end{table*}

In this work, we propose \emph{Anchored Learning}, a simple framework that explicitly controls distributional updates during fine-tuning via a dynamically evolving \emph{moving anchor}. Instead of matching a fixed reference distribution, the anchor is constructed by interpolating between the current model and a frozen reference (e.g., an SFT model), yielding an intermediate target that the model distills toward at each iteration. As the anchor co-evolves with the optimization trajectory, successive updates are constrained to remain local in distribution space, transforming global fine-tuning into a sequence of conservative steps that effectively induce an implicit trust-region behavior while preserving the efficiency and simplicity of offline training.

Anchored Learning admits theoretical guarantees on distributional stability. 
In particular, we show that the KL divergence between the anchored target and the current model admits an explicit per-iteration upper bound under both probability-space and logit-space anchoring, ensuring a stable and controlled transition between successive model distributions.
In summary, our main contributions are:
\begin{itemize}
    \item We propose \emph{Anchored Learning}, a simple and general framework that explicitly enforces local control over distributional updates in offline fine-tuning via a moving anchor.
    \item We provide theoretical guarantees on distributional stability by deriving explicit per-iteration KL-divergence bounds for both probability-space and logit-space anchoring.
    \item We demonstrate empirically that Anchored Learning achieves a strong trade-off between performance gains and stability across multiple benchmarks.
\end{itemize}

\section{Preliminaries}
\label{sec:prelim}

\paragraph{Problem Setting.}
We study  model adaptation in a supervised fine-tuning (SFT) setting.

Let $\mathcal{D}=\{(x_i,y_i)\}_{i=1}^N$ denote a labeled dataset for a new task, and let
$\mathcal{D}_X$ denote the empirical marginal distribution over inputs induced by $\mathcal{D}$.
Let $p_{\mathrm{base}}(\cdot\mid x)$ be a fixed base conditional distribution encoding previously acquired knowledge.
Standard SFT minimizes the empirical negative log-likelihood
$\mathcal{L}_{\mathrm{sft}}(\theta)
= \mathbb{E}_{(x,y)\sim\mathcal{D}}[-\log p_\theta(y \mid x)]$
on $\mathcal{D}$ starting from $p_{\mathrm{base}}$,
yielding an SFT reference model $p_{\mathrm{sft}}(\cdot\mid x)$.
Throughout this work, we assume $p_{\mathrm{sft}}$ is trained once and then kept fixed.

\paragraph{Scope and Practical Constraints.}
In practice, the data used to acquire general capabilities in open-source LLMs is often unknown or unavailable, making replay or joint training on such data infeasible.
Accordingly, we restrict our scope to \emph{purely offline fine-tuning on the target dataset} $\mathcal{D}$, excluding replay-based methods, mixed-dataset training, and parameter isolation, and focus on improving stability through controlled optimization under this realistic constraint.

\paragraph{Limitations of Standard SFT.}
While SFT can effectively fit the target distribution induced by $\mathcal{D}$, it provides no explicit mechanism
to preserve prior behaviors. When the target distribution differs substantially from pretraining, gradients may
systematically overwrite previously acquired representations due to parameter sharing and non-convex optimization,
leading to distributional drift and catastrophic forgetting.

A natural baseline for mitigating forgetting is to regularize a trainable model $p_{\theta}(\cdot\mid x)$ toward the fixed base model $p_{\mathrm{base}}(\cdot\mid x)$ via a KL penalty:
\begin{equation}
\min_{\theta}\;
\mathcal{L}_{\mathrm{sft}}(\theta)
+
\lambda\,\mathbb{E}_{(x, y)\sim \mathcal{D}}
\!\left[
\mathrm{KL}\!\left(
p_{\theta}(\cdot \mid x)
\,\|\, 
p_{\mathrm{base}}(\cdot \mid x)
\right)
\right],
\label{eq:naive_kl}
\end{equation}
where $\lambda>0$ controls the regularization strength.
Although intuitive, this approach often performs poorly when the target distribution deviates substantially from the base model. In practice, the gradients of the task loss and the KL penalty frequently conflict, leading to slow convergence or unstable optimization, while attraction toward a fixed base model becomes increasingly misaligned as adaptation progresses, suppressing beneficial task-specific updates and limiting final performance. 

These limitations motivate a mechanism that (i) reduces persistent gradient conflict, (ii) allows the reference distribution to evolve during optimization, and (iii) enforces local, trust-region-like constraints for stable updates.

\begin{figure}[t]
  \vskip 0.2in
  \begin{center}
    \centerline{\includegraphics[width=\columnwidth]{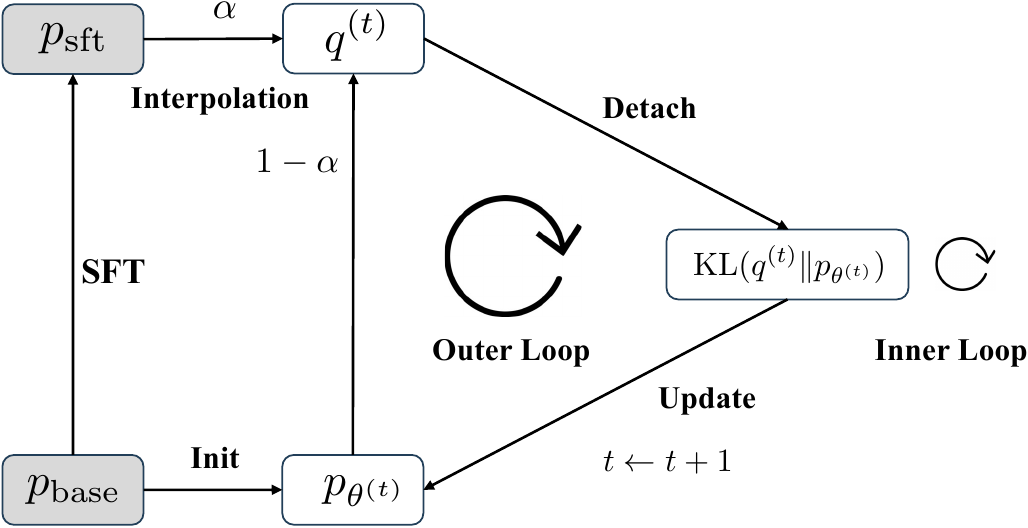}}
\caption{
Overview of the Anchored Learning framework.
Starting from the base model $p_{\mathrm{base}}$, a supervised fine-tuned model
$p_{\mathrm{sft}}$ is obtained via standard SFT and kept fixed thereafter.
At outer iteration $t$, an anchor distribution $q^{(t)}$ is constructed by
interpolating between the current trainable model $p_{\theta^{(t)}}$ and the
frozen SFT model with coefficient $\alpha$.
The anchor is detached from the computation graph and serves as a fixed target
for the inner-loop optimization, which minimizes the distillation objective
$\mathrm{KL}(q^{(t)} \,\|\, p_{\theta})$ to update the model parameters.
The updated model becomes $p_{\theta^{(t+1)}}$ for the next outer iteration.
Gray boxes indicate frozen models ($p_{\mathrm{base}}$ and $p_{\mathrm{sft}}$),
while $p_{\theta^{(t)}}$ is updated during training.
}
    \label{fig:overview}
  \end{center}
\end{figure}

\section{Approach}
\label{sec:approach}

We propose \emph{Anchored Learning}, a two-level framework that explicitly controls distributional drift relative to the base model while enabling effective adaptation to the new dataset $\mathcal{D}$ (Alg.~\ref{alg:anchored_learning}).
Instead of regularizing toward a fixed reference distribution, Anchored Learning constructs a \emph{dynamic anchor} that co-evolves with the model and provides a locally consistent optimization target (Fig.~\ref{fig:overview}).

Anchored Learning decouples \emph{anchor selection} from \emph{anchor fitting}.
In the outer loop, an anchor distribution $q^{(t)}$ specifies the desired
direction of model evolution (Sec.~\ref{sec:outer}), while in the inner loop, the
model is updated to approximate this anchor via a generic distillation objective
(Sec.~\ref{sec:inner}). The anchor is instantiated using interpolation schemes
between the current model and the fixed SFT reference model $p_{\mathrm{sft}}$
(Sec.~\ref{sec:interp}), providing an explicit mechanism to balance stability and
plasticity. This design induces a local trust-region behavior: at each outer
iteration, the model moves only a controlled distance from its current state
toward a task-optimal direction, mitigating persistent gradient conflicts and
excessive drift.

\subsection{Outer Loop: Anchor Selection via Interpolation}
\label{sec:outer}

Anchored Learning proceeds in discrete outer iterations indexed by
$t = 0,1,2,\ldots, T$, where $T$ denotes the total number of outer iterations and the initial model ($t=0$) corresponding to the base distribution
$p_{\mathrm{base}}$. At each iteration, an anchor distribution is constructed by
interpolating between the current model
$p_{\theta^{(t)}}(\cdot\mid x)$ and the fixed SFT reference model
$p_{\mathrm{sft}}(\cdot\mid x)$:
\begin{equation}
q^{(t)}(\cdot\mid x)
=
\mathcal{I}_\alpha\!\left(
p_{\theta^{(t)}}(\cdot\mid x),\,
p_{\mathrm{sft}}(\cdot\mid x)
\right),
\label{eq:anchor_general}
\end{equation}
where $\mathcal{I}_\alpha$ denotes an interpolation operator with coefficient
$\alpha\in(0,1)$. Interpolating with the current model is essential: if
interpolation were performed only against a fixed reference, the anchor would be
static and the procedure would degenerate into conventional distillation with a
fixed teacher. The specific instantiations of $\mathcal{I}_\alpha$ are described
in Sec.~\ref{sec:interp}. Although we focus on interpolation between two models,
the framework naturally extends to multiple references.

\subsection{Inner Loop: Anchor Fitting via Distillation}
\label{sec:inner}

Given the anchor distribution $q^{(t)}$, the inner loop approximately projects
the current model onto the anchor by minimizing the distillation objective:
\begin{equation}
\theta^{(t+1)}
\;\approx\;
\arg\min_{\theta}
\;\mathcal{L}_{\mathrm{distill}}(\theta; q^{(t)}), ~ 
\text{initialized at } \theta^{(t)}.
\label{eq:inner_argmin}
\end{equation}
In practice, this minimization is carried out by running a gradient-based
optimizer for $K$ \emph{epochs} over the dataset $\mathcal{D}$ (e.g., AdamW or
SGD).
We implement the inner loop using a simple \emph{off-policy} knowledge distillation scheme on the fixed dataset $\mathcal{D}$.
The framework naturally extends to on-policy or hybrid distillation variants, which we leave to future work.

\subsection{Instantiations of the Interpolation Operator}
\label{sec:interp}

We instantiate the interpolation operator $\mathcal{I}_\alpha$ in two
representation spaces: logit space and probability space.

\paragraph{Logit-space interpolation.}
Let $z_{\theta^{(t)}}(x)$ and $z_{\mathrm{sft}}(x)$ denote the pre-softmax logits of
the current and SFT models, respectively. The interpolated logits are
\begin{equation}
z^{(t)}(x)
=
(1-\alpha)\, z_{\theta^{(t)}}(x)
+
\alpha\, z_{\mathrm{sft}}(x),
\label{eq:logit_interp}
\end{equation}
and the anchor distribution is obtained via
\begin{equation}
q^{(t)}(\cdot\mid x) = \mathrm{softmax}\!\big(z^{(t)}(x)\big).
\end{equation}

\paragraph{Probability-space interpolation.}
Interpolation can also be performed directly on predictive distributions:
\begin{equation}
q^{(t)}(\cdot\mid x)
=
(1-\alpha)\, p_{\theta^{(t)}}(\cdot\mid x)
+
\alpha\, p_{\mathrm{sft}}(\cdot\mid x),
\label{eq:prob_interp}
\end{equation}
which corresponds to a convex mixture of model predictions. We implicitly assume
that all distributions share the same support so that the KL divergence is
well-defined.

% \paragraph{Parameter-space interpolation.}
% Alternatively, interpolation can be applied in parameter space. Let
% $\theta^{(t)}$ and $\theta_{\mathrm{sft}}$ denote the parameters of the current and
% SFT models, respectively. The interpolated parameters are given by
% \begin{equation}
% \tilde{\theta}^{(t)}
% =
% (1-\alpha)\, \theta^{(t)}
% +
% \alpha\, \theta_{\mathrm{sft}}.
% \label{eq:param_interp}
% \end{equation}
% The anchor distribution is then defined as
% $q^{(t)}(\cdot\mid x) = p_{\tilde{\theta}^{(t)}}(\cdot\mid x)$.
% While parameter-space interpolation is simple and computationally efficient, it
% may not preserve functional smoothness in highly non-linear networks; we
% empirically evaluate its behavior in Sec.~X.

% More generally, the interpolation operator can be defined under alternative
% geometries, such as spherical or manifold-aware interpolation. We leave these
% extensions to future work.

\section{Theoretical Analysis}
\label{sec:theory}

We analyze the properties of the anchor distribution $q^{(t)}(\cdot|x)$ defined in
Eq.~\eqref{eq:anchor_general}. All results hold pointwise for any input
$(x, y) \in \mathcal{D}$. We provide proof sketches to highlight the key mechanisms,
while full derivations are deferred to Appendix~\ref{app:proofs}.

%-------------------------------------------------
\subsection{Bounded Divergence in Probability Space}
\label{sec:prob_bound}

We first establish that when using probability-space interpolation
(Eq.~\eqref{eq:prob_interp}), the anchor distribution bounds the update magnitude
relative to the current model, acting as a dynamic trust-region constraint.

\begin{lemma}[Divergence Bound for Probability Interpolation]
\label{lem:prob_bound}
Let the anchor $q^{(t)}$ be defined via probability-space interpolation as
\[
q^{(t)} = (1-\alpha)p_{\theta^{(t)}} + \alpha p_{\mathrm{sft}},
\qquad \alpha \in [0,1].
\]
Then,
\begin{equation}
\mathrm{KL}\big(q^{(t)}(\cdot|x) \,\|\, p_{\theta^{(t)}}(\cdot|x)\big)
\;\le\;
\alpha\,\mathrm{KL}\big(p_{\mathrm{sft}}(\cdot|x) \,\|\, p_{\theta^{(t)}}(\cdot|x)\big).
\end{equation}
\end{lemma}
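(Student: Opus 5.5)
The plan is to use joint convexity of the KL divergence, or more simply its convexity in the first argument with the second argument held fixed. Fix an input $x$ and drop the conditioning from the notation. Write $p = p_{\theta^{(t)}}(\cdot\mid x)$ and $s = p_{\mathrm{sft}}(\cdot\mid x)$, so that $q^{(t)} = (1-\alpha)p + \alpha s$. Everything then reduces to a statement about finite (or countable) probability vectors on a common support, which the paper assumes so that all divergences are well defined.

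The key step is to observe that the map $r \mapsto \mathrm{KL}(r\,\|\,p) = \sum_y r(y)\log\frac{r(y)}{p(y)}$ is convex in $r$ for fixed $p$. It is a sum of terms $\phi(r(y)) - r(y)\log p(y)$, where $\phi(u) = u\log u$ is convex on $[0,\infty)$ and the remaining terms are linear in $r$. Applying this convexity to the convex combination $(1-\alpha)p + \alpha s$ gives
\[
\mathrm{KL}\big((1-\alpha)p + \alpha s \,\big\|\, p\big) \le (1-\alpha)\,\mathrm{KL}(p\,\|\,p) + \alpha\,\mathrm{KL}(s\,\|\,p).
\]
Since $\mathrm{KL}(p\,\|\,p) = 0$, the right-hand side reduces to $\alpha\,\mathrm{KL}(s\,\|\,p)$, which is exactly the claimed bound.

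Equivalently, we can invoke the standard joint convexity of KL divergence applied to the pairs $(p,p)$ and $(s,p)$ with weights $(1-\alpha,\alpha)$. This pair-level convexity follows from the log-sum inequality applied coordinatewise.

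The only point requiring care is the support and edge conditions. We need $q^{(t)} \ll p$, which holds whenever $s \ll p$: if $p(y)=0$ then $s(y)=0$ and hence $q^{(t)}(y)=0$. Terms with $q^{(t)}(y)=0$ are handled by the convention $0\log 0 = 0$, which is consistent with the continuity of $\phi$ at $0$. If $\mathrm{KL}(s\,\|\,p)=\infty$, the bound holds trivially. The endpoints $\alpha\in\{0,1\}$ are immediate, since at $\alpha=0$ both sides are $0$ and at $\alpha=1$ the inequality is an equality.

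I do not expect a genuine obstacle here. The substantive step is just invoking convexity in the first argument, and everything else is bookkeeping.
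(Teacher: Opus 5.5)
Your proposal is correct and matches the paper's proof: both establish convexity of $r \mapsto \mathrm{KL}(r\,\|\,p_{\theta^{(t)}})$ from the convexity of $u\log u$ plus a linear term, apply Jensen's inequality to $(1-\alpha)p_{\theta^{(t)}} + \alpha p_{\mathrm{sft}}$, and use $\mathrm{KL}(p_{\theta^{(t)}}\,\|\,p_{\theta^{(t)}})=0$. Your extra remarks on absolute continuity, infinite divergence, and the endpoints $\alpha\in\{0,1\}$ are sound bookkeeping that the paper sidesteps through its full-support standing assumption.
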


\begin{proof}
The result follows from the convexity of the mapping
$q \mapsto \mathrm{KL}(q \,\|\, p_{\theta^{(t)}})$ and Jensen's inequality,
together with the identity
$\mathrm{KL}(p_{\theta^{(t)}} \,\|\, p_{\theta^{(t)}}) = 0$.
\end{proof}

%-------------------------------------------------
\subsection{Geometric Interpretation of Logit Interpolation}
\label{sec:logit_geom}

When using logit-space interpolation (Eq.~\eqref{eq:logit_interp}), the resulting
anchor distribution takes the form of a renormalized geometric mean:
\begin{equation}
q^{(t)}_y(x)
=
\frac{p_{\theta^{(t)}}(y|x)^{1-\alpha} \, p_{\mathrm{sft}}(y|x)^{\alpha}}{Z(x)},
\label{eq:geom_interp}
\end{equation}
where $Z(x)$ is the partition function. This form admits a natural geometric
interpretation.

\begin{lemma}[Reverse-KL Barycenter / Geodesic Path]
\label{lem:barycenter}
Let $\alpha \in (0,1)$ and define
\begin{equation}
\mathcal{J}(p)
\;\triangleq\;
(1-\alpha)\,\mathrm{KL}\big(p \,\|\, p_{\theta^{(t)}}(\cdot|x)\big)
+
\alpha\,\mathrm{KL}\big(p \,\|\, p_{\mathrm{sft}}(\cdot|x)\big).
\label{eq:def_J}
\end{equation}
Then the anchor distribution $q^{(t)}$ in Eq.~\eqref{eq:geom_interp} satisfies
\begin{equation}
q^{(t)}(\cdot|x)
=\argmin_{p \in \Delta} \; \mathcal{J}(p).
\end{equation}
\end{lemma}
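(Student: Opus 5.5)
The plan is to rewrite $\mathcal{J}$ so that it becomes a single KL divergence to $q^{(t)}$ plus a constant independent of $p$; Gibbs' inequality then identifies the minimizer. Fix $x$ and write $p_1 = p_{\theta^{(t)}}(\cdot|x)$, $p_2 = p_{\mathrm{sft}}(\cdot|x)$. As in Sec.~\ref{sec:interp}, assume both have full support on the finite vocabulary, so all logarithms are finite. For any $p\in\Delta$ we expand
\begin{equation*}
\mathcal{J}(p) = \sum_y p(y)\Big[\log p(y) - (1-\alpha)\log p_1(y) - \alpha \log p_2(y)\Big],
\end{equation*}
using $(1-\alpha)+\alpha = 1$ to merge the two $\sum_y p(y)\log p(y)$ terms.

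Next, substitute the definition of Eq.~\eqref{eq:geom_interp}. It gives $(1-\alpha)\log p_1(y) + \alpha\log p_2(y) = \log q^{(t)}(y) + \log Z(x)$, where $Z(x) = \sum_y p_1(y)^{1-\alpha}p_2(y)^{\alpha} > 0$. Since $\sum_y p(y) = 1$, this yields
\begin{equation*}
\mathcal{J}(p) = \mathrm{KL}\big(p \,\|\, q^{(t)}(\cdot|x)\big) - \log Z(x).
\end{equation*}
It is worth checking that this is indeed the softmax of the interpolated logits in Eq.~\eqref{eq:logit_interp}. Writing $\log p_i = z_i - \log\sum e^{z_i}$, the log-normalizers are constant in $y$ and are absorbed into $Z(x)$.

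By Gibbs' inequality, $\mathrm{KL}(p\,\|\,q^{(t)}) \ge 0$, with equality if and only if $p = q^{(t)}$. Hence $q^{(t)}$ is the unique minimizer of $\mathcal{J}$ over $\Delta$, and the minimum value is $-\log Z(x)$. This also shows $Z(x)\le 1$, which is consistent with Hölder's inequality.

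The only delicate point is support. If some $p(y)>0$ while $p_1(y)=0$ or $p_2(y)=0$, then $\mathcal{J}(p)=+\infty$, so such $p$ cannot be minimizers. Allowing $p(y)=0$ is handled by the convention $0\log 0 = 0$. Under the paper's shared-support assumption, this step is routine. The "geodesic" reading follows directly: as $\alpha$ ranges over $(0,1)$, $q^{(t)}$ traces the exponential-family (e-geodesic) path between $p_1$ and $p_2$.
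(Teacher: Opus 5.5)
Your proof is correct, but it takes a different route from the paper's. The paper writes a Lagrangian with a multiplier for $\sum_i u_i = 1$ only and solves the stationarity condition to \emph{derive} the geometric-mean form $u_k \propto p_{\theta^{(t)},k}^{1-\alpha}p_{\mathrm{sft},k}^{\alpha}$. It then invokes strict convexity of $\mathcal{J}$ (with the full-support assumption placing the optimum in the interior) to conclude that this stationary point is the unique global minimizer. You instead take the candidate as given and prove the exact identity $\mathcal{J}(p) = \mathrm{KL}(p\,\|\,q^{(t)}) - \log Z(x)$, so Gibbs' inequality yields optimality, uniqueness, and the optimal value $-\log Z(x)$ at once. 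This needs no convexity argument and no discussion of whether the nonnegativity constraints are active. Your identity also gives more. Evaluating it at $p=p_{\theta^{(t)}}$ and at $p=q^{(t)}$ gives $(1-\alpha)\,\mathrm{KL}(q^{(t)}\|p_{\theta^{(t)}}) = \alpha\,\mathrm{KL}(p_{\theta^{(t)}}\|p_{\mathrm{sft}}) - \mathrm{KL}(p_{\theta^{(t)}}\|q^{(t)}) - \alpha\,\mathrm{KL}(q^{(t)}\|p_{\mathrm{sft}})$, which is Lemma~\ref{lem:logit_bound} with its slack written out exactly. The paper's Lagrangian route, by contrast, is constructive: it finds the minimizer rather than checking a guessed one, which is how one would discover the formula in the first place.
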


\begin{proof}
The result follows from solving the first-order optimality conditions of the
constrained problem via a Lagrangian formulation; a complete derivation is
provided in Appendix~\ref{app:proofs}.
\end{proof}

\paragraph{Remark on Information Geometry.}
Lemma~\ref{lem:barycenter} implies that $q^{(t)}$ lies on the exponential geodesic
($e$-geodesic) connecting $p_{\theta^{(t)}}$ and $p_{\mathrm{sft}}$, equivalently
corresponding to linear interpolation in log-probability (logit) space followed
by normalization. This suggests that logit-space Anchored Learning encourages the
model to move along a smooth and statistically natural path toward the SFT
reference.

\begin{lemma}[Bound for Logit Interpolation]
\label{lem:logit_bound}
Let $\alpha \in [0,1)$. For the anchor distribution $q^{(t)}$ defined in
Eq.~\eqref{eq:geom_interp}, the divergence from the current model satisfies
\begin{equation}
\mathrm{KL}\big(q^{(t)}(\cdot|x) \,\|\, p_{\theta^{(t)}}(\cdot|x)\big)
\le
\frac{\alpha}{1-\alpha}\,
\mathrm{KL}\big(p_{\theta^{(t)}}(\cdot|x) \,\|\, p_{\mathrm{sft}}(\cdot|x)\big).
\end{equation}
\end{lemma}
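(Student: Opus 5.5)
The plan is to deduce the bound directly from the variational characterization in Lemma~\ref{lem:barycenter}, evaluating the objective $\mathcal{J}$ at two points of the simplex. The case $\alpha=0$ is trivial: then $q^{(t)}=p_{\theta^{(t)}}$ by Eq.~\eqref{eq:geom_interp}, and both sides vanish. So I will assume $\alpha\in(0,1)$, which is exactly the range in which Lemma~\ref{lem:barycenter} applies.

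\textbf{Step 1 (optimality).} By Lemma~\ref{lem:barycenter}, $q^{(t)}$ minimizes $\mathcal{J}$ over $\Delta$. Since $p_{\theta^{(t)}}(\cdot|x)\in\Delta$ is a feasible competitor, $\mathcal{J}(q^{(t)})\le\mathcal{J}(p_{\theta^{(t)}})$.

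\textbf{Step 2 (evaluate the competitor).} Using $\mathrm{KL}(p_{\theta^{(t)}}\,\|\,p_{\theta^{(t)}})=0$,
\[
\mathcal{J}(p_{\theta^{(t)}})=\alpha\,\mathrm{KL}\big(p_{\theta^{(t)}}\,\|\,p_{\mathrm{sft}}\big).
\]

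\textbf{Step 3 (lower-bound the minimizer).} By nonnegativity of KL, the second term of $\mathcal{J}(q^{(t)})$ can be dropped, giving
\[
\mathcal{J}(q^{(t)})\ge(1-\alpha)\,\mathrm{KL}\big(q^{(t)}\,\|\,p_{\theta^{(t)}}\big).
\]

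\textbf{Step 4 (combine).} Chaining Steps 1--3 gives
\[
(1-\alpha)\,\mathrm{KL}\big(q^{(t)}\,\|\,p_{\theta^{(t)}}\big)\le\alpha\,\mathrm{KL}\big(p_{\theta^{(t)}}\,\|\,p_{\mathrm{sft}}\big),
\]
and dividing by $1-\alpha>0$ yields the claim.

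There is no serious obstacle beyond Lemma~\ref{lem:barycenter} itself. The only point needing care is well-definedness: all quantities must be finite. This holds under the paper's standing shared-support assumption, and it guarantees that the minimizer in Lemma~\ref{lem:barycenter} is attained at the geometric mixture with $Z(x)>0$.

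As an independent check, I could also verify the bound by direct computation. Writing $\log q^{(t)}-\log p_{\theta^{(t)}}=\alpha(\log p_{\mathrm{sft}}-\log p_{\theta^{(t)}})-\log Z$ and applying Jensen's inequality to $\log Z=\log\mathbb{E}_{p_{\theta^{(t)}}}[(p_{\mathrm{sft}}/p_{\theta^{(t)}})^\alpha]$ should give the same estimate. However, the variational route above is shorter and is the one I would present.
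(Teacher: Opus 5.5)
Your proof is correct and matches the paper's argument: compare $\mathcal{J}(q^{(t)})$ with $\mathcal{J}(p_{\theta^{(t)}})$ using the optimality from Lemma~\ref{lem:barycenter}, drop the nonnegative term $\alpha\,\mathrm{KL}(q^{(t)}\,\|\,p_{\mathrm{sft}})$, and divide by $1-\alpha$. Your separate treatment of $\alpha=0$ is a small improvement, since the paper invokes Lemma~\ref{lem:barycenter} (stated only for $\alpha\in(0,1)$) without addressing that endpoint.
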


\begin{proof}
By the optimality of $q^{(t)}$ in Lemma~\ref{lem:barycenter}, the weighted sum of
divergences evaluated at $q^{(t)}$ is no larger than that evaluated at any other
distribution, in particular $p_{\theta^{(t)}}$. Expanding the objective, dropping
the non-negative term $\mathrm{KL}(q^{(t)} \,\|\, p_{\mathrm{sft}})$, and
rearranging terms yields the stated bound.
\end{proof}

%-------------------------------------------------
\subsection{Trust-Region Interpretation and Dynamic Stability}
\label{sec:trust_region}

Lemma~\ref{lem:prob_bound} and Lemma~\ref{lem:logit_bound} show that under both
probability-space and logit-space interpolation, the anchor distribution
$q^{(t)}$ stays within a controlled divergence from the current model
$p_{\theta^{(t)}}$. Consequently, minimizing the distillation objective
$\mathrm{KL}(q^{(t)} \,\|\, p_\theta)$ induces a \emph{local} and bounded behavioral
update at each outer iteration, which can be interpreted as an implicit
trust-region constraint in distribution space.

Crucially, the anchor is constructed relative to the \emph{current} model rather
than a fixed reference. This dynamic coupling makes the target distribution
co-evolve with the optimization trajectory, preventing the training objective from
imposing a persistent attraction toward a static and potentially misaligned base
distribution once partial adaptation has occurred. As a result, Anchored Learning
reduces the optimization tension that commonly arises when simultaneously fitting
the new task and preserving the base behavior.

To contrast, consider static KL regularization toward the base model, which
implicitly optimizes a fixed compromise objective of the form
\begin{equation}
\min_{p \in \Delta}
\;
(1-\beta)\,\mathrm{KL}\big(p \,\|\, p_{\mathrm{sft}}\big)
+
\beta\,\mathrm{KL}\big(p \,\|\, p_{\mathrm{base}}\big),
\qquad \beta \in (0,1).
\label{eq:static_compromise}
\end{equation}
The optimizer of Eq.~\eqref{eq:static_compromise} is a single barycenter between
$p_{\mathrm{sft}}$ and $p_{\mathrm{base}}$, independent of the current model state.
Such a static target can remain misaligned with the desired adaptation trajectory
after the model has already moved toward the task distribution, thereby inducing
persistent gradient conflicts between fitting and preservation objectives.

In contrast, Anchored Learning realizes a sequence of moving barycenters that
continuously interpolate between the current model and the SFT reference.

\begin{proposition}[Dynamic Convergence of Anchored Learning]
\label{prop:dynamic_static}
Assume that the inner-loop optimization exactly matches the anchor distribution at
each outer iteration, i.e., $p_{\theta^{(t+1)}} = q^{(t)}$, and that the anchor is
constructed via probability-space interpolation:
\[
q^{(t)} = (1-\alpha)p_{\theta^{(t)}} + \alpha p_{\mathrm{sft}},
\qquad \alpha \in (0,1).
\]
Then the induced recursion satisfies
\[
p_{\theta^{(t+1)}}
=
(1-\alpha)p_{\theta^{(t)}} + \alpha p_{\mathrm{sft}},
\]
and converges geometrically to $p_{\mathrm{sft}}$ as $t \to \infty$.
Moreover,
\[
\mathrm{KL}\big(p_{\theta^{(t)}} \,\|\, p_{\mathrm{sft}}\big)
\le
(1-\alpha)^t \,
\mathrm{KL}\big(p_{\mathrm{base}} \,\|\, p_{\mathrm{sft}}\big).
\]
\end{proposition}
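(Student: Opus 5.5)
The recursion itself needs no work: the exact-matching assumption $p_{\theta^{(t+1)}} = q^{(t)}$, combined with the probability-space definition of $q^{(t)}$, gives $p_{\theta^{(t+1)}} = (1-\alpha)p_{\theta^{(t)}} + \alpha p_{\mathrm{sft}}$ pointwise in $x$. I will then subtract $p_{\mathrm{sft}}$ from both sides to get the affine contraction
\[
p_{\theta^{(t+1)}} - p_{\mathrm{sft}} = (1-\alpha)\bigl(p_{\theta^{(t)}} - p_{\mathrm{sft}}\bigr).
\]
Unrolling from $t=0$, where $p_{\theta^{(0)}} = p_{\mathrm{base}}$, gives the closed form
\[
p_{\theta^{(t)}} = (1-\alpha)^t p_{\mathrm{base}} + \bigl(1-(1-\alpha)^t\bigr) p_{\mathrm{sft}}.
\]
Geometric convergence follows immediately, in any norm (e.g.\ total variation), at rate $(1-\alpha)^t$, since $\alpha\in(0,1)$.

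For the KL bound I will use the closed form together with convexity of $p \mapsto \mathrm{KL}(p\,\|\,p_{\mathrm{sft}})$, which is the same mechanism as Lemma~\ref{lem:prob_bound}, now with the fixed second argument $p_{\mathrm{sft}}$. Since $p_{\theta^{(t)}}$ is a convex combination of $p_{\mathrm{base}}$ (weight $(1-\alpha)^t$) and $p_{\mathrm{sft}}$ (weight $1-(1-\alpha)^t$), Jensen's inequality gives
\[
\mathrm{KL}(p_{\theta^{(t)}}\,\|\,p_{\mathrm{sft}}) \le (1-\alpha)^t\,\mathrm{KL}(p_{\mathrm{base}}\,\|\,p_{\mathrm{sft}}) + \bigl(1-(1-\alpha)^t\bigr)\,\mathrm{KL}(p_{\mathrm{sft}}\,\|\,p_{\mathrm{sft}}),
\]
and the last term is zero. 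This is exactly the stated bound.

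An alternative is to induct one step at a time: apply convexity to $p_{\theta^{(t+1)}} = (1-\alpha)p_{\theta^{(t)}} + \alpha p_{\mathrm{sft}}$ to get $\mathrm{KL}(p_{\theta^{(t+1)}}\|p_{\mathrm{sft}}) \le (1-\alpha)\mathrm{KL}(p_{\theta^{(t)}}\|p_{\mathrm{sft}})$, then iterate. Both routes work; I prefer the induction, since it also shows the divergence to $p_{\mathrm{sft}}$ is monotonically nonincreasing along the trajectory.

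There is no real obstacle. The only care needed is well-definedness: the paper already assumes all distributions share a common support, so $\mathrm{KL}(p_{\mathrm{base}}\|p_{\mathrm{sft}})<\infty$, and mixtures preserve that support. I will also state that all identities hold pointwise in $x$, as in the other results of Sec.~\ref{sec:theory}. Finally, I will note that KL convergence to zero at rate $(1-\alpha)^t$ itself implies geometric convergence of $p_{\theta^{(t)}}$ to $p_{\mathrm{sft}}$ (e.g.\ in total variation via Pinsker, at rate $(1-\alpha)^{t/2}$). That convergence claim is in any case already covered directly by the closed form.
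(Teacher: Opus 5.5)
Your proposal is correct and follows the paper's proof: the paper also unrolls the recursion to $p_{\theta^{(t)}} = (1-\alpha)^t p_{\mathrm{base}} + \bigl(1-(1-\alpha)^t\bigr)p_{\mathrm{sft}}$ and then applies convexity of $\mathrm{KL}(\cdot\,\|\,p_{\mathrm{sft}})$ in its first argument, using $\mathrm{KL}(p_{\mathrm{sft}}\,\|\,p_{\mathrm{sft}})=0$. Your one-step induction variant is equally valid, and it additionally shows that the divergence to $p_{\mathrm{sft}}$ is nonincreasing along the trajectory, which the paper does not state.
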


Under the exact-projection assumption, the resulting recursion converges
geometrically to $p_{\mathrm{sft}}$ while maintaining bounded stepwise deviation;
detailed derivations are provided in Appendix~\ref{app:dynamic_static}. This
perspective explains why Anchored Learning yields stable progressive adaptation
whereas naive static regularization often fails to achieve a favorable trade-off
in practice.

\section{Experiments}
\label{sec:exp}

\subsection{Experimental Setup}
\paragraph{Datasets.}
We evaluate Anchored Learning under task-specific fine-tuning settings across three representative target tasks:
\textbf{iGSM}~\cite{ye2025physics}, a synthetic grade-school mathematics benchmark with controllable difficulty;
\textbf{MedCalc}~\cite{khandekar2024medcalc}, a benchmark for medical calculation and numerical reasoning;
and \textbf{IFEval}~\cite{zhou2023instructionfollowingevaluationlargelanguage}, which evaluates instruction-following under verifiable constraints.
For each task, models are fine-tuned or distilled on the official training split, hyperparameters are selected based on the corresponding validation set when available, and final performance is reported on the test set.

To quantify catastrophic forgetting, we additionally evaluate all finetuned models on a suite of general-purpose benchmarks that are not observed during target-task training.
These include \textbf{MMLU-Pro}~\cite{wang2024mmlu} for knowledge and reasoning,
\textbf{Countdown}~\cite{tinyzero} for mathematical reasoning,
and \textbf{MBPP}, \textbf{MBPP+}, \textbf{HumanEval}, and \textbf{HumanEval+}~\cite{liu2023is} for code generation.
This evaluation protocol enables simultaneous assessment of target-task gains and degradation of prior capabilities.

\paragraph{Models.}
We evaluate our method on two diverse model families to assess robustness across architectures and scales:
\textbf{Qwen2.5-Instruct}~\cite{qwen2025qwen25technicalreport} at scales of 1.5B, 3B, 7B, and 14B parameters,
and \textbf{Llama-3.2-3B-Instruct}~\cite{grattafiori2024llama}.
Unless otherwise specified, the Qwen2.5-3B-Instruct and Llama-3.2-3B-Instruct models serve as the primary testbeds for baseline comparisons due to their comparable parameter counts.

\begin{figure*}[ht]
    \centering
    \includegraphics[width=\linewidth]{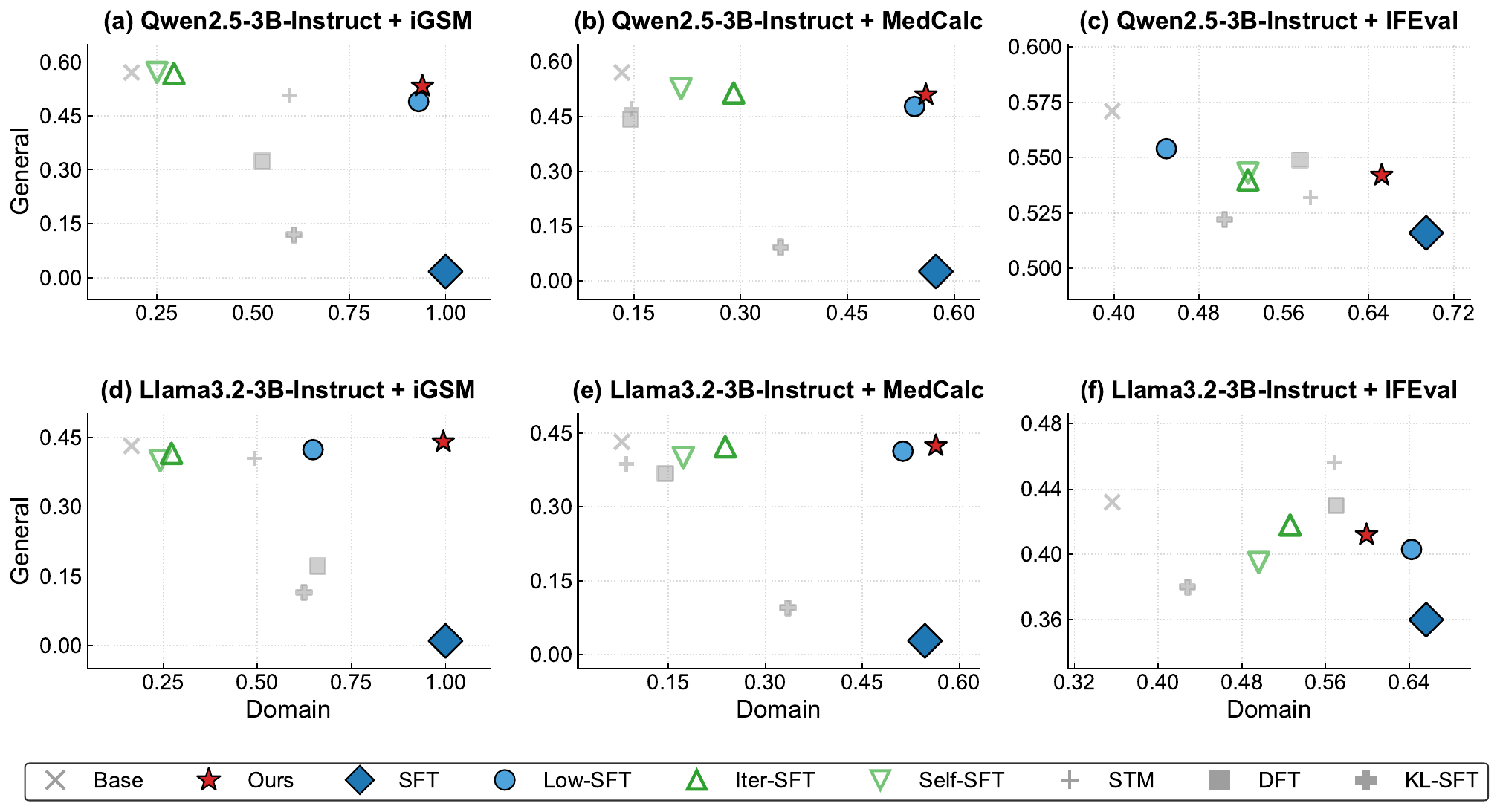}
   \caption{Performance trade-offs between domain performance (x-axis) and general performance (y-axis) across three target tasks (iGSM, MedCalc, and IFEval) for Qwen2.5-3B-Instruct (top row) and Llama3.2-3B-Instruct (bottom row). Each point corresponds to a different fine-tuning method. Points closer to the upper-right corner indicate better performance--stability trade-offs. Anchored Learning (Ours) consistently lies near the Pareto frontier, achieving strong domain gains while preserving general capabilities.}

    \label{fig:exp_main_results1}
\end{figure*}

\paragraph{Training Protocol.}
For all methods, fine-tuning or distillation is performed on the same target-task training data using identical optimization budgets to ensure fair comparison.
Unless otherwise specified, all models are initialized from the corresponding instruction-tuned checkpoints.
Hyperparameters for baseline methods follow the settings reported in their original papers or are tuned within comparable search ranges.
Unless otherwise specified, all experiments use Qwen2.5-3B-Instruct as the default model, with logit-space interpolation, an interpolation coefficient $\alpha=0.5$, $K=5$ inner-loop iterations per outer iteration, and a total of $T=5$ outer iterations.
% Additional implementation details are provided in Appendix~\ref{sec:implementation}.

\paragraph{Evaluation Metrics.}
We report \textbf{Accuracy} for all non-coding benchmarks and \textbf{Pass@1} for code generation tasks.
To summarize general capability retention, we additionally report a consolidated \textit{General Average} score computed across all evaluation benchmarks.

\paragraph{Baselines.}
We compare Anchored Learning against standard \textbf{SFT} (full fine-tuning, learning rate $1\times10^{-5}$) and a diverse set of forgetting-mitigation baselines.
These include \textbf{Low-SFT}~\cite{pareja2025unveiling,lin2025sft}, which applies a substantially reduced learning rate ($1\times10^{-6}$) to preserve general capabilities;
\textbf{KL-SFT}, which adds a KL penalty with coefficient $\lambda=0.2$ to constrain the fine-tuned model toward a fixed reference distribution; 
\textbf{Self-SFT}~\cite{zelikman2022star,chen2025retaining}, which bootstraps training data from model-generated responses filtered by a ground-truth reward function;
\textbf{Iter-SFT}~\cite{chen2025retaining}, which leverages approximately on-policy data generation to reduce distributional shift;
\textbf{STM}~\cite{wu2025mitigating}, which masks high-perplexity tokens to reduce the influence of outliers during optimization;
and \textbf{DFT}~\cite{wu2025generalization}, which dynamically rescales the training objective based on token probabilities to stabilize gradient updates.
All baselines are trained under comparable optimization budgets and follow the recommended hyperparameter settings from their original works when applicable.

\subsection{Main Results}

Fig.~\ref{fig:exp_main_results1} summarizes the trade-offs between domain performance and general performance across six model--task settings. Three consistent observations emerge.

\textbf{First}, standard SFT achieves the highest domain performance but suffers from severe degradation in general capabilities, often collapsing toward the lower-right corner of the plots. This confirms that unconstrained fine-tuning induces substantial distributional drift and catastrophic forgetting.

\textbf{Second}, existing mitigation baselines alleviate forgetting to varying degrees but typically sacrifice substantial domain performance, leading to points clustered toward the upper-left or central regions. This highlights the inherent difficulty of achieving both strong adaptation and stable retention using static regularization or data-centric heuristics.

\textbf{Third}, Anchored Learning consistently lies near the Pareto frontier across all settings, achieving strong domain gains while maintaining general performance close to the base model. This demonstrates that explicitly controlling the optimization trajectory via a moving anchor enables a superior performance--stability trade-off compared to both aggressive fine-tuning and conservative regularization baselines.

\subsection{Scaling Analysis}

\begin{table}[t]
\centering
\renewcommand{\arraystretch}{1.05}
\caption{Scaling comparison across Qwen2.5 model sizes on iGSM. 
We report general performance and domain performance on iGSM.
Anchored Learning consistently preserves higher general capabilities than Low-SFT while achieving comparable domain performance across all scales.}
\label{tab:scaling}

\begin{tabular}{llccc}
\toprule
\textbf{Data} & \textbf{Method} & \textbf{1.5B} & \textbf{3B} & \textbf{7B} \\
\midrule
\multirow{3}{*}{\emph{Domain}}
& Base      & 0.144 & 0.184 & 0.228 \\
& Low-SFT   & 0.954 & 0.930 & 0.970 \\
& {Ours} & 0.950 & {0.936} & 0.968 \\
\midrule
\multirow{3}{*}{\emph{General}}
& Base      & 0.452 & 0.571 & 0.674 \\
& Low-SFT   & 0.374 & 0.496 & 0.592 \\
& {Ours} & {0.395} & {0.533} & {0.616} \\
\bottomrule
\end{tabular}
\end{table}

\paragraph{Scaling behavior across model sizes.}
Tab.~\ref{tab:scaling} reports performance on iGSM across three Qwen2.5 model scales (1.5B, 3B, and 7B), comparing general capability retention and domain performance.
Across all scales, Anchored Learning consistently preserves substantially higher general performance than Low-SFT, while achieving comparable domain accuracy.
Specifically, on the smallest 1.5B model, Anchored Learning improves general performance from 0.374 (Low-SFT) to 0.395, while maintaining nearly identical iGSM accuracy (0.950 vs.\ 0.954).
This trend persists at 3B and 7B scales, where Anchored Learning recovers a significant portion of the general capability gap to the base model, improving over Low-SFT by +3.7 and +2.4 points respectively, without sacrificing domain performance.
Notably, the domain accuracy of Anchored Learning closely matches that of Low-SFT across all sizes, indicating that stability gains are not obtained at the cost of reduced adaptation effectiveness.
These results demonstrate that the benefits of Anchored Learning are robust to model scaling and do not rely on a specific parameter regime.
The consistent improvement in general retention across model sizes suggests that explicit control of distributional updates provides a scalable mechanism for stabilizing offline fine-tuning.

\subsection{Other Analysis}
We conduct ablation studies on Qwen2.5-3B-Instruct with \textbf{MedCalc} as the target task to analyze the contribution of different components and hyperparameters.

\begin{figure}[t]
  \vskip 0.2in
  \begin{center}
    \centerline{\includegraphics[width=\columnwidth]{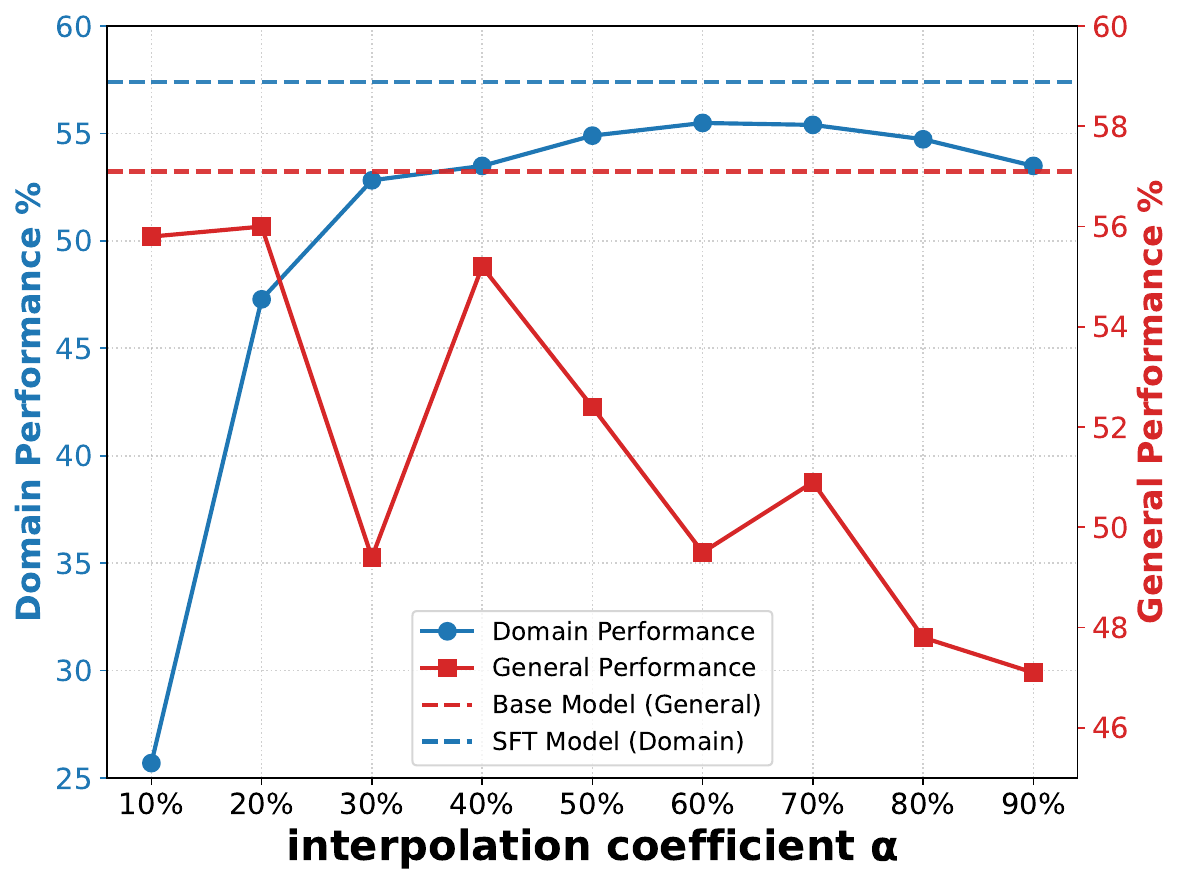}}
\caption{Domain (left) and general (right) performance versus the interpolation coefficient $\alpha$ for Qwen2.5-3B-Instruct on MedCalc. Dashed lines denote the SFT and base reference levels, illustrating a performance--stability trade-off.}
    \label{fig:exp_results_alpha}
  \end{center}
\end{figure}
\paragraph{Sensitivity to the interpolation coefficient.}
Fig.~\ref{fig:exp_results_alpha} illustrates the effect of the interpolation coefficient on domain performance and general performance.
Domain performance increases rapidly at small coefficients and quickly saturates, indicating diminishing returns from more aggressive interpolation.
In contrast, general performance gradually degrades and exhibits non-monotonic variations as the coefficient increases, reflecting the heightened sensitivity of stability to update strength.
An intermediate range of coefficients achieves a favorable performance--stability trade-off, motivating the use of controlled interpolation to balance adaptation and retention.

\begin{figure}[h]
  \vskip 0.2in
  \begin{center}
    \centerline{\includegraphics[width=0.9\columnwidth]{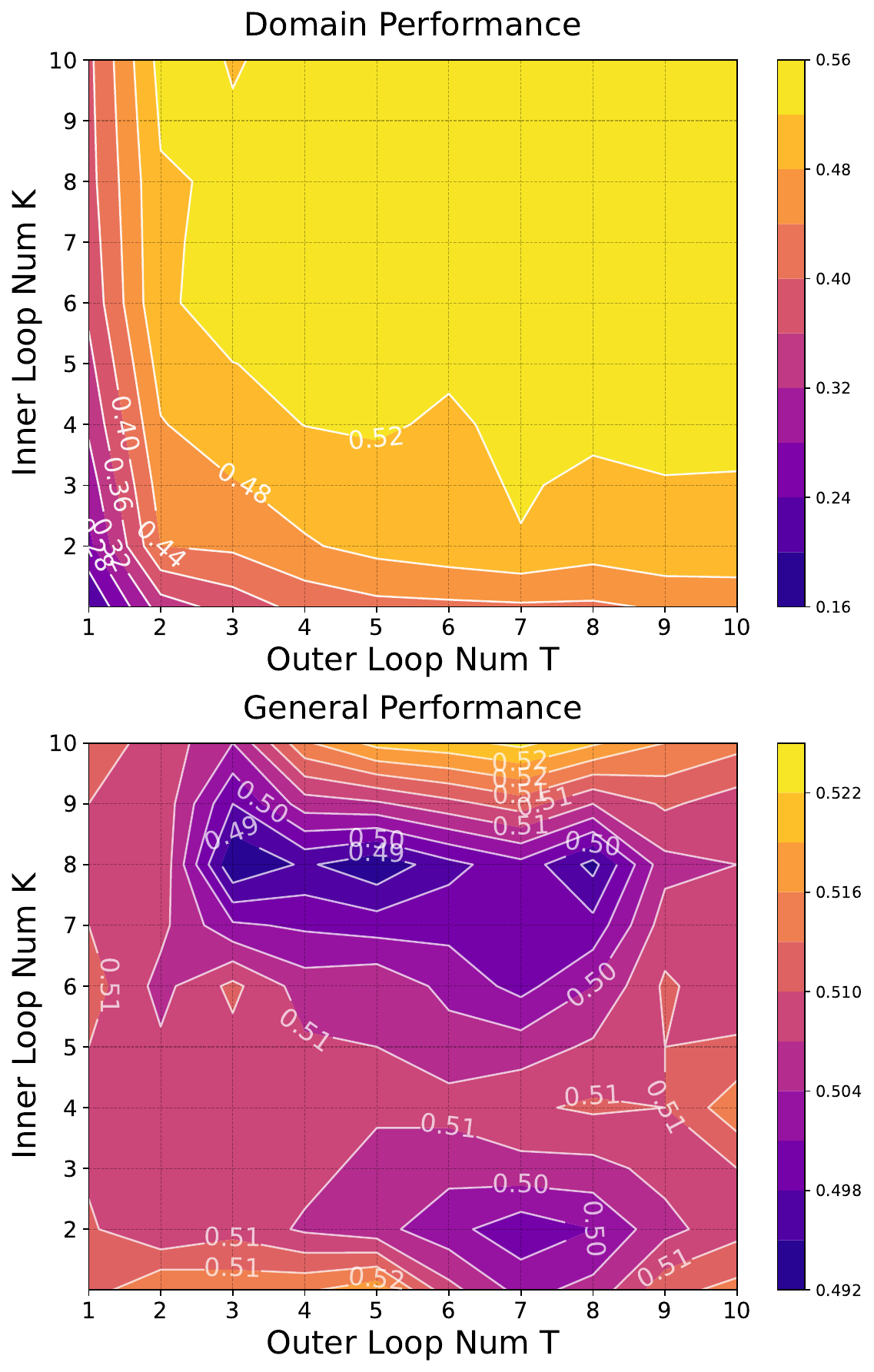}}
\caption{Sensitivity of target and general performance to the outer and inner loop iterations on Qwen2.5-3B-Instruct with MedCalc as the target task. General performance exhibits non-monotonic degradation despite saturated target accuracy.}
    \label{fig:exp_results_in_out}
  \end{center}
\end{figure}
\paragraph{Sensitivity to update schedule.}
Fig.~\ref{fig:exp_results_in_out} examines how the update schedule, parameterized by the numbers of outer and inner loop iterations, affects (i) target-domain performance and (ii) general capability retention.
First, the two objectives respond fundamentally differently to the same schedule changes.
Target-domain performance increases almost monotonically with the number of outer iterations and quickly reaches a saturation regime (top panel), whereas general performance exhibits clear non-monotonic behavior with structured variations across the grid (bottom panel).
This indicates that optimizing for target accuracy alone can be misleading, as stability is governed by path-dependent effects rather than a simple notion of ``more training is better.''
Second, high target accuracy does not necessarily imply stable adaptation.
In several regions, the model already attains near-saturated domain performance while general performance degrades noticeably, forming distinct degradation basins.
This suggests that aggressive or poorly balanced schedules can induce stability loss even when the target objective appears well optimized, consistent with the view that catastrophic forgetting is driven by excessive distributional drift along the optimization trajectory.
Third, these degradation regions are localized and highly sensitive to the schedule, implying that static hyperparameter choices are unlikely to reliably avoid instability.
Small changes in the inner or outer iteration counts can move the training dynamics across these basins, leading to markedly different general retention.
Together, these observations motivate the need for explicit trajectory control mechanisms, rather than relying solely on fixed schedules or globally tuned regularizers, to achieve robust gain--stability trade-offs in offline fine-tuning.

\paragraph{Sensitivity to interpolation operator.}
Fig.~\ref{fig:exp_results_logit_prob} highlights a distinct plasticity-stability trade-off.
\begin{figure}
    \centering
    \includegraphics[width=\linewidth]{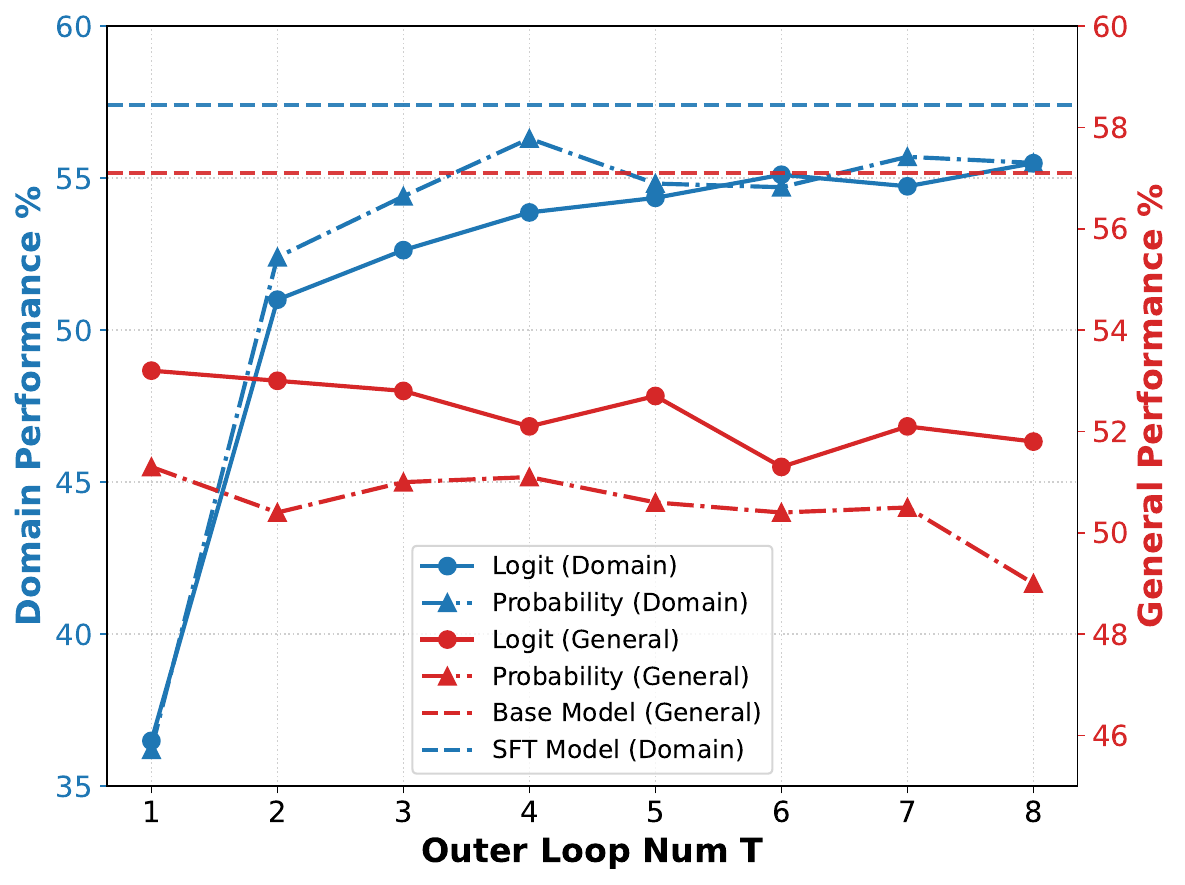}
    \caption{Difference of the interpolation space (logit vs. probability) on domain and general performance across outer loop iterations for Qwen2.5-3B-Instruct on MedCalc.}
    \label{fig:exp_results_logit_prob}
\end{figure}
The performance divergence stems from the mathematical properties of the averaging methods. Probability-space interpolation operates as an arithmetic mean, inducing mode-covering behavior that favors plasticity. This yields superior domain adaptation (Blue-dashed) but higher forgetting. In contrast, Logit-space interpolation acts as a geometric mean (after renormalization), inducing mode-seeking behavior. This conservative aggregation favors stability, better preserving general capabilities (Red-solid) by suppressing low-probability tails.

\section{Related Work}
\label{sec:related}
\paragraph{Post-training Stability and Forgetting.}
Catastrophic forgetting is a long-standing challenge in neural network adaptation~\cite{kirkpatrick2017overcoming}, and has recently been revisited in the context of LLM post-training. 
Recent comparisons between RL and SFT suggest that on-policy RL tends to forget less while achieving comparable target performance, and that forgetting correlates with distributional shift measured by KL divergence to the base policy~\cite{shenfeld2025rl}. 
Complementarily, the robustness of RL has been attributed to the mode-seeking effect induced by (approximately) on-policy data, rather than specific algorithmic components such as KL regularization or advantage estimation~\cite{chen2025retaining}. 
Our work builds on these insights but targets a different setting: we focus on purely offline SFT where general-domain data is unavailable, and aim to explicitly control distributional drift through the design of the optimization trajectory.

\paragraph{Stabilizing Supervised Fine-Tuning.}
A broad line of work seeks to stabilize SFT and mitigate capability degradation through optimization- or data-centric interventions. 
On the optimization side, reducing the learning rate has been shown to improve retention and preserve general capabilities~\cite{pareja2025unveiling,lin2025sft}, while token- or objective-level modifications such as low-perplexity token learning and dynamic objective rescaling can improve robustness~\cite{wu2025mitigating,wu2025generalization}. 
On the data side, self-training and self-distillation approaches aim to narrow distribution gaps by distilling from model-generated or teacher outputs~\cite{zelikman2022star,yang2024self}, and approximately on-policy data generation has also been explored to reduce forgetting~\cite{chen2025retaining}. 
In contrast to these approaches, Anchored Learning introduces a dynamically evolving intermediate target (a moving anchor) to explicitly control distributional updates during offline fine-tuning, with theoretical per-iteration KL stability guarantees.

\section{Conclusion}
\label{sec:conclusion}

We presented \emph{Anchored Learning}, a simple framework that explicitly controls distributional updates during offline fine-tuning through a dynamically evolving moving anchor. 
By shaping intermediate target distributions, Anchored Learning transforms global optimization into a sequence of local and conservative updates, yielding implicit trust-region behavior with minimal overhead.
We established theoretical guarantees on distributional stability via explicit per-iteration KL-divergence bounds and demonstrated empirically that the method substantially reduces catastrophic forgetting while maintaining strong target-task performance across multiple benchmarks. 
These results highlight the importance of explicit trajectory control for stable post-training and suggest a practical alternative to implicit stability mechanisms in on-policy reinforcement learning.

\section*{Limitation}
\label{sec:limit}
The primary limitation of Anchored Learning is the additional computational overhead compared to standard SFT. 
Each iteration requires constructing an anchored target via interpolation and an extra forward pass for distillation, leading to moderately higher training latency and memory usage, typically about $1.5\times$--$2\times$ in our experiments. 
Although this overhead is substantially lower than that of on-policy reinforcement learning, it may limit scalability in highly resource-constrained settings.

% Acknowledgements should only appear in the accepted version.
% \section*{Acknowledgements}

% \section*{Impact Statement}
% \input{100impact_statment}

\bibliography{main}
\bibliographystyle{icml2026}

%%%%%%%%%%%%%%%%%%%%%%%%%%%%%%%%%%%%%%%%%%%%%%%%%%%%%%%%%%%%%%%%%%%%%%%%%%%%%%%
%%%%%%%%%%%%%%%%%%%%%%%%%%%%%%%%%%%%%%%%%%%%%%%%%%%%%%%%%%%%%%%%%%%%%%%%%%%%%%%
% APPENDIX
%%%%%%%%%%%%%%%%%%%%%%%%%%%%%%%%%%%%%%%%%%%%%%%%%%%%%%%%%%%%%%%%%%%%%%%%%%%%%%%
%%%%%%%%%%%%%%%%%%%%%%%%%%%%%%%%%%%%%%%%%%%%%%%%%%%%%%%%%%%%%%%%%%%%%%%%%%%%%%%
\newpage
\appendix
\onecolumn

\section{Detailed Theoretical Proofs}
\label{app:proofs}

In this appendix, we provide rigorous derivations for the properties of the
Anchored Learning target distributions discussed in
Section~\ref{sec:theory}. Throughout this appendix, all lemma and proposition
numbers refer to those in Section~\ref{sec:theory}. All arguments are carried out
pointwise for a fixed input $x$; for notational simplicity, we omit the
conditioning on $x$ and write $p_{\theta^{(t)}}$ in place of
$p_{\theta^{(t)}}(\cdot \mid x)$ whenever no ambiguity arises.

\paragraph{Standing Assumption.}
All model distributions are induced by a softmax parameterization and therefore
have full support on the discrete output space, i.e., $p_i > 0$ for all tokens $i$.
Consequently, all Kullback--Leibler (KL) divergences appearing below are finite and
well-defined, and all logarithms are taken over strictly positive arguments.
Moreover, this assumption guarantees that all optimal solutions considered below
lie in the interior of the probability simplex.

%-------------------------------------------------------------------------
\subsection{Probability-Space Interpolation (Proof of Lemma~\ref{lem:prob_bound})}

We first establish the divergence bound under linear interpolation in probability
space.

\paragraph{Restatement of Lemma~\ref{lem:prob_bound}.}
\textit{Let the anchor be defined as}
\[
q^{(t)} = (1-\alpha)p_{\theta^{(t)}} + \alpha p_{\mathrm{sft}}, 
\quad \alpha \in [0,1].
\]
\textit{Then}
\[
\mathrm{KL}\big(q^{(t)} \,\|\, p_{\theta^{(t)}}\big)
\le 
\alpha \, \mathrm{KL}\big(p_{\mathrm{sft}} \,\|\, p_{\theta^{(t)}}\big).
\]

\begin{proof}
For a fixed reference distribution $p_{\theta^{(t)}}$, define the functional
\[
F(u) \triangleq \mathrm{KL}(u \| p_{\theta^{(t)}})
= \sum_i u_i \log u_i - \sum_i u_i \log p_{\theta^{(t)},i}.
\]
The first term is the negative entropy, which is convex in $u$ since the scalar
function $f(t)=t\log t$ satisfies $f''(t)=1/t>0$ for all $t>0$. The second term is
linear in $u$. Hence, $F$ is convex on the probability simplex.

By Jensen's inequality, for any distributions $u_1,u_2$ and $\lambda \in [0,1]$,
\[
F\big((1-\lambda)u_1 + \lambda u_2\big)
\le (1-\lambda)F(u_1) + \lambda F(u_2).
\]
Instantiating this inequality with $u_1 = p_{\theta^{(t)}}$,
$u_2 = p_{\mathrm{sft}}$, and $\lambda = \alpha$, and noting that
$\mathrm{KL}(p_{\theta^{(t)}} \| p_{\theta^{(t)}})=0$, yields
\[
\mathrm{KL}\big((1-\alpha)p_{\theta^{(t)}} + \alpha p_{\mathrm{sft}}
\,\|\, p_{\theta^{(t)}}\big)
\le 
\alpha \, \mathrm{KL}\big(p_{\mathrm{sft}} \,\|\, p_{\theta^{(t)}}\big).
\]
Substituting the definition of $q^{(t)}$ completes the proof. \qedhere
\end{proof}

%-------------------------------------------------------------------------
\subsection{Logit-Space Interpolation (Proof of Lemma~\ref{lem:barycenter})}

We next derive the closed-form solution of the reverse-KL barycenter and show that
it coincides with geometric-mean (logit-space) interpolation.

\paragraph{Restatement of Lemma~\ref{lem:barycenter}.}
\textit{Let $\Delta$ denote the probability simplex. The optimization problem}
\[
\min_{u \in \Delta} 
\; 
\mathcal{J}(u)
\quad \text{where} \quad 
\mathcal{J}(u) 
= (1-\alpha)\mathrm{KL}(u\|p_{\theta^{(t)}}) 
+ \alpha \mathrm{KL}(u\|p_{\mathrm{sft}}),
\]
\textit{admits the unique minimizer}
\[
q^{(t)}_i = \frac{1}{Z} \, 
p_{\theta^{(t)},i}^{\,1-\alpha} \, p_{\mathrm{sft},i}^{\,\alpha},
\qquad 
Z = \sum_j p_{\theta^{(t)},j}^{\,1-\alpha} p_{\mathrm{sft},j}^{\,\alpha}.
\]

\begin{proof}
We enforce the simplex constraint $\sum_i u_i = 1$ via a Lagrange multiplier
$\lambda$. The Lagrangian is
\begin{align*}
\mathcal{L}(u,\lambda)
&= (1-\alpha)\sum_i u_i \log \frac{u_i}{p_{\theta^{(t)},i}}
+ \alpha \sum_i u_i \log \frac{u_i}{p_{\mathrm{sft},i}}
+ \lambda \Big(\sum_i u_i - 1\Big) \\
&= \sum_i u_i 
\Big[
\log u_i 
- (1-\alpha)\log p_{\theta^{(t)},i}
- \alpha \log p_{\mathrm{sft},i}
+ \lambda
\Big] - \lambda .
\end{align*}
For each coordinate $k$, using $\frac{d}{dx}(x\log x)=1+\log x$, we obtain
\[
\frac{\partial \mathcal{L}}{\partial u_k}
= 1 + \log u_k 
- \log\!\big(p_{\theta^{(t)},k}^{1-\alpha}
p_{\mathrm{sft},k}^{\alpha}\big)
+ \lambda .
\]
Setting this derivative to zero yields
\[
\log u_k
= \log\!\big(p_{\theta^{(t)},k}^{1-\alpha}
p_{\mathrm{sft},k}^{\alpha}\big) - (1+\lambda),
\qquad
u_k
= C \, p_{\theta^{(t)},k}^{1-\alpha} p_{\mathrm{sft},k}^{\alpha},
\]
where $C = e^{-(1+\lambda)}$ is a normalization constant. Enforcing $\sum_k u_k = 1$
gives
\[
C^{-1} = \sum_j p_{\theta^{(t)},j}^{1-\alpha} p_{\mathrm{sft},j}^{\alpha}
\eqqcolon Z,
\]
and therefore
\[
u_k^* = \frac{p_{\theta^{(t)},k}^{1-\alpha}
p_{\mathrm{sft},k}^{\alpha}}{Z}.
\]

Since both reference distributions have full support and $\alpha \in (0,1)$, the
objective $\mathcal{J}$ is strictly convex on the interior of the simplex. Hence,
the stationary point characterized above is the unique global minimizer. Taking
logarithms shows that
\[
\log q^{(t)}_i =
(1-\alpha)\log p_{\theta^{(t)},i}
+ \alpha \log p_{\mathrm{sft},i}
- \log Z,
\]
which corresponds to linear interpolation in log-probability (logit) space.
\qedhere
\end{proof}

%-------------------------------------------------------------------------
\subsection{Logit-Space Divergence Bound (Proof of Lemma~\ref{lem:logit_bound})}

We finally establish an upper bound on the divergence between the logit-interpolated
anchor and the current model distribution.

\paragraph{Restatement of Lemma~\ref{lem:logit_bound}.}
\textit{Let $\alpha \in [0,1)$. For the anchor $q^{(t)}$ defined by logit
interpolation,}
\[
\mathrm{KL}(q^{(t)} \| p_{\theta^{(t)}}) 
\le 
\frac{\alpha}{1-\alpha} 
\, \mathrm{KL}(p_{\theta^{(t)}} \| p_{\mathrm{sft}}).
\]

\begin{proof}
Let $\mathcal{J}(u)$ denote the objective defined in
Lemma~\ref{lem:barycenter}. Since $q^{(t)}$ is the global minimizer of $\mathcal{J}$
over $\Delta$, for any $u' \in \Delta$,
\[
\mathcal{J}(q^{(t)}) \le \mathcal{J}(u').
\]
Choosing $u' = p_{\theta^{(t)}}$ gives
\[
\mathcal{J}(p_{\theta^{(t)}})
= (1-\alpha)\mathrm{KL}(p_{\theta^{(t)}}\|p_{\theta^{(t)}})
+ \alpha \mathrm{KL}(p_{\theta^{(t)}}\|p_{\mathrm{sft}})
= \alpha \mathrm{KL}(p_{\theta^{(t)}}\|p_{\mathrm{sft}}).
\]
On the other hand,
\[
\mathcal{J}(q^{(t)})
= (1-\alpha)\mathrm{KL}(q^{(t)}\|p_{\theta^{(t)}})
+ \alpha \mathrm{KL}(q^{(t)}\|p_{\mathrm{sft}}).
\]
Combining the two displays and using the non-negativity of KL divergence yields
\[
(1-\alpha)\mathrm{KL}(q^{(t)}\|p_{\theta^{(t)}})
\le 
\alpha \mathrm{KL}(p_{\theta^{(t)}}\|p_{\mathrm{sft}}).
\]
Dividing both sides by $1-\alpha$ (which is strictly positive) completes the proof.
\qedhere
\end{proof}

%-------------------------------------------------------------------------
\subsection{Dynamic Versus Static Barycenters (Proof of Proposition~\ref{prop:dynamic_static})}
\label{app:dynamic_static}

We provide detailed derivations for the comparison between static KL barycenters
and the dynamic anchor recursion used in Anchored Learning.

\paragraph{Dynamic Anchor Recursion.}
Assume that the inner-loop optimization exactly matches the anchor distribution at
each outer iteration, i.e.,
\begin{equation}
p_{\theta^{(t+1)}} = q^{(t)}.
\label{eq:exact_projection}
\end{equation}
For probability-space interpolation, the anchor is defined as
\begin{equation}
q^{(t)} = (1-\alpha)p_{\theta^{(t)}} + \alpha p_{\mathrm{sft}},
\qquad \alpha \in (0,1).
\label{eq:dynamic_anchor_def}
\end{equation}
Substituting Eq.~\eqref{eq:dynamic_anchor_def} into
Eq.~\eqref{eq:exact_projection} yields the linear recursion
\begin{equation}
p_{\theta^{(t+1)}} =
(1-\alpha)p_{\theta^{(t)}} + \alpha p_{\mathrm{sft}}.
\label{eq:linear_recursion}
\end{equation}

Unrolling Eq.~\eqref{eq:linear_recursion} gives
\begin{align}
p_{\theta^{(t)}}
&= (1-\alpha)^t p_{\theta^{(0)}} 
+ \sum_{k=0}^{t-1} (1-\alpha)^k \alpha p_{\mathrm{sft}} \notag \\
&= (1-\alpha)^t p_{\mathrm{base}}
+ \big(1-(1-\alpha)^t\big)p_{\mathrm{sft}},
\label{eq:closed_form_dynamic}
\end{align}
where we used the initialization $p_{\theta^{(0)}} = p_{\mathrm{base}}$ and the
finite geometric series identity.
Since $(1-\alpha)^t \to 0$ as $t \to \infty$, the sequence converges pointwise to
$p_{\mathrm{sft}}$.
Moreover, the convergence rate is geometric with ratio $(1-\alpha)$.

To quantify the convergence in divergence, we invoke the joint convexity of KL
divergence in its first argument:
\begin{align}
\mathrm{KL}\big(p_{\theta^{(t)}} \,\|\, p_{\mathrm{sft}}\big)
&=
\mathrm{KL}\big((1-\alpha)^t p_{\mathrm{base}}
+ (1-(1-\alpha)^t)p_{\mathrm{sft}}
\,\|\, p_{\mathrm{sft}}\big) \notag \\
&\le (1-\alpha)^t
\mathrm{KL}\big(p_{\mathrm{base}} \,\|\, p_{\mathrm{sft}}\big).
\label{eq:dynamic_kl_decay}
\end{align}
Hence, the divergence to the SFT distribution decays exponentially over outer
iterations.

\paragraph{Static KL Barycenter.}
Consider the static optimization problem
\begin{equation}
p^{\star}
=
\argmin_{p \in \Delta}
\;
(1-\beta)\,\mathrm{KL}\big(p \,\|\, p_{\mathrm{sft}}\big)
+
\beta\,\mathrm{KL}\big(p \,\|\, p_{\mathrm{base}}\big),
\qquad \beta \in (0,1).
\label{eq:static_problem_app}
\end{equation}
By an argument analogous to the derivation in this appendix (Proof of
Lemma~\ref{lem:barycenter}), the unique minimizer admits the closed form
\begin{equation}
p^{\star}_i
=
\frac{1}{Z_{\beta}}
\, p_{\mathrm{sft},i}^{\,1-\beta}
\, p_{\mathrm{base},i}^{\,\beta},
\qquad
Z_{\beta} =
\sum_j p_{\mathrm{sft},j}^{\,1-\beta} p_{\mathrm{base},j}^{\,\beta}.
\label{eq:static_solution_app}
\end{equation}
Unless $\beta = 0$, $p^{\star}$ differs from $p_{\mathrm{sft}}$ and therefore
remains at a strictly positive KL divergence:
\[
\mathrm{KL}(p^{\star} \,\|\, p_{\mathrm{sft}}) > 0.
\]
Consequently, static regularization enforces convergence to a fixed compromise
distribution rather than to the task-optimal distribution.

\paragraph{Comparison.}
Equations~\eqref{eq:closed_form_dynamic}--\eqref{eq:dynamic_kl_decay} show that the
dynamic anchor recursion converges geometrically to $p_{\mathrm{sft}}$ while
maintaining controlled stepwise deviation from the current model.
In contrast, Eq.~\eqref{eq:static_solution_app} demonstrates that static KL
regularization converges in a single step to a fixed barycenter that does not
evolve with the optimization trajectory.
This formalizes the fundamental distinction between progressive adaptation and
static compromise.

\section{Detailed Experimental Results}
\label{sec:detailed}

To supplement the visual results discussed in the main text, we report the comprehensive numerical data in this section. Tables \ref{tab:performance_comparison}, \ref{tab:medcalc_performance}, and \ref{tab:ifeval_performance} provide the exact accuracy and pass rates for the comparative experiments on iGSM, MedCalc, and IFEval, respectively.

\begin{table*}[h]
\centering
\small
\caption{Detailed numerical results based on Qwen2.5-3B-Instruct across general and iGSM benchmark.}
\label{tab:performance_comparison}
% 使用 resizebox 自动调整表格宽度以适应页面
\resizebox{\textwidth}{!}{%
\begin{tabular}{lcccccccc}
\toprule
\multirow{4}{*}{\textbf{Method}} & \multicolumn{6}{c}{\textbf{General}} & \multicolumn{1}{c}{\textbf{Domain}} & \multirow{4}{*}{\textbf{Overall Avg.}} \\
\cmidrule(lr){2-7} \cmidrule(lr){8-8}
 & \multicolumn{1}{c}{Science} & \multicolumn{4}{c}{Coding} & \multicolumn{1}{c}{Math} & \multicolumn{1}{c}{Math} & \\
\cmidrule(lr){2-2} \cmidrule(lr){3-6} \cmidrule(lr){7-7} \cmidrule(lr){8-8}
 & mmlu pro & humaneval & humaneval+ & mbpp & mbpp+ & countdown & iGSM & \\
 & acc & pass@1 & pass@1 & pass@1 & pass@1 & acc & acc & \\
\midrule
Base & 0.419 & 0.738 & 0.677 & 0.743 & 0.622 & 0.225 & 0.184 & 0.515 \\
SFT & 0.031 & 0.012 & 0.012 & 0.026 & 0.019 & 0.000 & 1.000 & 0.157 \\
Low-SFT & 0.333 & 0.689 & 0.646 & 0.714 & 0.590 & 0.004 & 0.930 & 0.558 \\
% KL-SFT 0.05 & 0.062 & 0.030 & 0.030 & 0.101 & 0.093 & 0.000 & 0.886 & 0.172 \\
KL-SFT & 0.102 & 0.140 & 0.128 & 0.185 & 0.159 & 0.000 & 0.606 & 0.189 \\
% KL-SFT 0.5 & 0.127 & 0.165 & 0.128 & 0.241 & 0.193 & 0.000 & 0.352 & 0.172 \\
STM & 0.321 & 0.713 & 0.646 & 0.693 & 0.598 & 0.078 & 0.594 & 0.520 \\
DFT & 0.163 & 0.457 & 0.421 & 0.481 & 0.423 & 0.000 & 0.524 & 0.353 \\
Self-SFT & 0.386 & 0.713 & 0.665 & 0.754 & 0.656 & 0.250 & 0.250 & 0.525 \\
Iter-SFT & 0.377 & 0.713 & 0.646 & 0.741 & 0.638 & 0.290 & 0.294 & 0.528 \\
Ours & 0.377 & 0.701 & 0.640 & 0.704 & 0.593 & 0.184 & 0.936 & \textbf{0.591} \\
\bottomrule
\end{tabular}%
}
\end{table*}

\begin{table*}[h]
\centering
\small
\caption{Detailed numerical results based on Qwen2.5-3B-Instruct across general and MedCalc benchmark.}
\label{tab:medcalc_performance}
\resizebox{\textwidth}{!}{%
\begin{tabular}{lcccccccc}
\toprule
\multirow{4}{*}{\textbf{Method}} & \multicolumn{6}{c}{\textbf{General}} & \multicolumn{1}{c}{\textbf{Domain}} & \multirow{4}{*}{\textbf{Overall Avg.}} \\
\cmidrule(lr){2-7} \cmidrule(lr){8-8}
 & \multicolumn{1}{c}{Science} & \multicolumn{4}{c}{Coding} & \multicolumn{1}{c}{Math} & \multicolumn{1}{c}{Medical} & \\
\cmidrule(lr){2-2} \cmidrule(lr){3-6} \cmidrule(lr){7-7} \cmidrule(lr){8-8}
 & mmlu pro & humaneval & humaneval+ & mbpp & mbpp+ & countdown & MedCalc & \\
 & acc & pass@1 & pass@1 & pass@1 & pass@1 & acc & acc & \\
\midrule
Base & 0.419 & 0.738 & 0.677 & 0.743 & 0.622 & 0.225 & 0.133 & 0.508 \\
SFT & 0.109 & 0.012 & 0.012 & 0.012 & 0.012 & 0.000 & 0.574 & 0.104 \\
Low-SFT & 0.317 & 0.665 & 0.604 & 0.680 & 0.556 & 0.047 & 0.544 & 0.488 \\
% KL-SFT 0.05 & 0.086 & 0.122 & 0.104 & 0.095 & 0.079 & 0.000 & 0.502 & 0.141 \\
KL-SFT & 0.096 & 0.104 & 0.079 & 0.143 & 0.127 & 0.000 & 0.356 & 0.129 \\
% KL-SFT 0.5 & 0.118 & 0.183 & 0.165 & 0.243 & 0.217 & 0.000 & 0.198 & 0.161 \\
STM & 0.277 & 0.646 & 0.598 & 0.693 & 0.569 & 0.046 & 0.147 & 0.425 \\
DFT & 0.297 & 0.604 & 0.530 & 0.635 & 0.550 & 0.043 & 0.145 & 0.401 \\
Self-sft & 0.394 & 0.689 & 0.671 & 0.661 & 0.556 & 0.191 & 0.216 & 0.483 \\
Iter-SFT & 0.338 & 0.640 & 0.561 & 0.733 & 0.611 & 0.209 & 0.290 & 0.483 \\
Ours & 0.356 & 0.683 & 0.628 & 0.704 & 0.598 & 0.096 & 0.563 & \textbf{0.518} \\
\bottomrule
\end{tabular}%
}
\end{table*}

\begin{table*}[h]
\centering
\small
\caption{Detailed numerical results based on Qwen2.5-3B-Instruct across general and IFEval benchmark.}
\label{tab:ifeval_performance}
\resizebox{\textwidth}{!}{%
\begin{tabular}{lcccccccc}
\toprule
\multirow{4}{*}{\textbf{Method}} & \multicolumn{6}{c}{\textbf{General}} & \multicolumn{1}{c}{\textbf{Domain}} & \multirow{4}{*}{\textbf{Overall Avg.}} \\
\cmidrule(lr){2-7} \cmidrule(lr){8-8}
 & \multicolumn{1}{c}{Science} & \multicolumn{4}{c}{Coding} & \multicolumn{1}{c}{Math} & \multicolumn{1}{c}{Ins Following} & \\
\cmidrule(lr){2-2} \cmidrule(lr){3-6} \cmidrule(lr){7-7} \cmidrule(lr){8-8}
 & mmlu pro & humaneval & humaneval+ & mbpp & mbpp+ & countdown & ifeval & \\
 & acc & pass@1 & pass@1 & pass@1 & pass@1 & acc & acc & \\
\midrule
Base & 0.419 & 0.738 & 0.677 & 0.743 & 0.622 & 0.225 & 0.398 & 0.546 \\
SFT & 0.375 & 0.683 & 0.622 & 0.696 & 0.582 & 0.138 & 0.694 & 0.541 \\
Low-SFT & 0.417 & 0.726 & 0.659 & 0.741 & 0.611 & 0.171 & 0.449 & 0.539 \\
KL-SFT& 0.383 & 0.744 & 0.671 & 0.680 & 0.595 & 0.056 & 0.504 & 0.519 \\
STM & 0.361 & 0.732 & 0.671 & 0.709 & 0.598 & 0.121 & 0.585 & 0.540 \\
DFT & 0.413 & 0.701 & 0.659 & 0.722 & 0.630 & 0.168 & 0.575 & 0.553 \\
Self-SFT & 0.411 & 0.671 & 0.640 & 0.701 & 0.595 & 0.237 & 0.526 & 0.540 \\
Iter-SFT & 0.358 & 0.701 & 0.659 & 0.717 & 0.595 & 0.209 & 0.526 & 0.538 \\
Ours & 0.409 & 0.707 & 0.646 & 0.706 & 0.611 & 0.170 & 0.652 & \textbf{0.557} \\
\bottomrule
\end{tabular}%
}
\end{table*}

\begin{table*}[h]
\centering

\caption{Detailed numerical results based on Llama-3.2-3B-Instruct across general and iGSM benchmark.}
\label{tab:performance_comparison_new}
\resizebox{\textwidth}{!}{%
\begin{tabular}{lcccccccc}
\toprule
\multirow{4}{*}{\textbf{Method}} & \multicolumn{6}{c}{\textbf{General}} & \multicolumn{1}{c}{\textbf{Domain}} & \multirow{4}{*}{\textbf{Overall Avg.}} \\
\cmidrule(lr){2-7} \cmidrule(lr){8-8}
 & \multicolumn{1}{c}{Science} & \multicolumn{4}{c}{Coding} & \multicolumn{1}{c}{Math} & \multicolumn{1}{c}{Math}& \\
\cmidrule(lr){2-2} \cmidrule(lr){3-6} \cmidrule(lr){7-7} \cmidrule(lr){8-8}
 & mmlu pro & humaneval & humaneval+ & mbpp & mbpp+ & countdown & iGSM & \\
 & acc & pass@1 & pass@1 & pass@1 & pass@1 & acc & acc & \\
\midrule
Base & 0.347 & 0.549 & 0.500 & 0.627 & 0.516 & 0.051 & 0.166 & 0.394 \\
SFT & 0.012 & 0.060 & 0.060 & 0.012 & 0.012 & 0.000 & 1.000 & 0.191 \\
Low-SFT & 0.363 & 0.530 & 0.470 & 0.638 & 0.540 & 0.000 & 0.648 & 0.456 \\
KL-SFT & 0.020 & 0.183 & 0.165 & 0.243 & 0.217 & 0.000 & 0.624 & 0.207 \\
STM & 0.292 & 0.506 & 0.457 & 0.643 & 0.534 & 0.000 & 0.492 & 0.418 \\
DFT & 0.082 & 0.232 & 0.207 & 0.291 & 0.220 & 0.000 & 0.660 & 0.242 \\
Self-SFT & 0.312 & 0.543 & 0.476 & 0.534 & 0.466 & 0.070 & 0.242 & 0.378 \\
Iter-SFT & 0.322 & 0.564 & 0.503 & 0.564 & 0.494 & 0.050 & 0.272 & 0.396 \\
Ours & 0.343 & 0.573 & 0.512 & 0.669 & 0.548 & 0.000 & 0.994 & \textbf{0.520} \\
\bottomrule
\end{tabular}%
}
\end{table*}

\begin{table*}[h]
\centering
% 请根据实际使用的模型修改 [Model Name]，例如 Qwen2.5-1.5B-Instruct
\caption{Detailed numerical results based on Llama-3.2-3B-Instruct across general and MedCalc benchmark.}
\label{tab:medcalc_detailed}
\resizebox{\textwidth}{!}{%
\begin{tabular}{lcccccccc}
\toprule
\multirow{4}{*}{\textbf{Method}} & \multicolumn{6}{c}{\textbf{General}} & \multicolumn{1}{c}{\textbf{Domain}} & \multirow{4}{*}{\textbf{Overall Avg.}} \\
\cmidrule(lr){2-7} \cmidrule(lr){8-8}
 & \multicolumn{1}{c}{Science} & \multicolumn{4}{c}{Coding} & \multicolumn{1}{c}{Math} &  \multicolumn{1}{c}{Medical} & \\
\cmidrule(lr){2-2} \cmidrule(lr){3-6} \cmidrule(lr){7-7} \cmidrule(lr){8-8}
 & mmlu pro & humaneval & humaneval+ & mbpp & mbpp+ & countdown & MedCalc & \\
 & acc & pass@1 & pass@1 & pass@1 & pass@1 & acc & acc & \\
\midrule
Base & 0.347 & 0.549 & 0.500 & 0.627 & 0.516 & 0.051 & 0.078 & 0.381 \\
SFT & 0.011 & 0.060 & 0.060 & 0.019 & 0.019 & 0.000 & 0.547 & 0.102 \\
Low-SFT & 0.332 & 0.537 & 0.488 & 0.611 & 0.500 & 0.012 & 0.513 & 0.428 \\
KL-SFT 0.2 & 0.096 & 0.104 & 0.079 & 0.101 & 0.093 & 0.000 & 0.356 & 0.118 \\
STM & 0.243 & 0.537 & 0.470 & 0.579 & 0.484 & 0.006 & 0.085 & 0.343 \\
DFT & 0.280 & 0.457 & 0.415 & 0.574 & 0.481 & 0.000 & 0.145 & 0.336 \\
Self-SFT & 0.269 & 0.537 & 0.476 & 0.601 & 0.500 & 0.016 & 0.173 & 0.367 \\
Iter-SFT & 0.280 & 0.577 & 0.509 & 0.619 & 0.535 & 0.012 & 0.238 & 0.396 \\
Ours & 0.333 & 0.561 & 0.500 & 0.677 & 0.455 & 0.019 & 0.564 & \textbf{0.444} \\
\bottomrule
\end{tabular}%
}
\end{table*}

\begin{table*}[h]
\centering
\caption{Detailed numerical results based on Llama-3.2-3B-Instruct across general and IFEval benchmark.}
\label{tab:ifeval_detailed_comparison}
\resizebox{\textwidth}{!}{%
\begin{tabular}{lcccccccc}
\toprule
\multirow{4}{*}{\textbf{Method}} & \multicolumn{6}{c}{\textbf{General}} & \multicolumn{1}{c}{\textbf{Domain}} & \multirow{4}{*}{\textbf{Overall Avg.}} \\
\cmidrule(lr){2-7} \cmidrule(lr){8-8}
 & \multicolumn{1}{c}{Science} & \multicolumn{4}{c}{Coding} & \multicolumn{1}{c}{Math} &  \multicolumn{1}{c}{Ins Following} & \\
\cmidrule(lr){2-2} \cmidrule(lr){3-6} \cmidrule(lr){7-7} \cmidrule(lr){8-8}
 & mmlu pro & humaneval & humaneval+ & mbpp & mbpp+ & countdown & ifeval & \\
 & acc & acc & acc & acc & acc & acc & acc & \\
\midrule
Base & 0.347 & 0.549 & 0.500 & 0.627 & 0.516 & 0.051 & 0.356 & 0.421 \\
SFT & 0.327 & 0.507 & 0.425 & 0.520 & 0.420 & 0.012 & 0.656 & 0.410 \\
Low-SFT & 0.378 & 0.524 & 0.457 & 0.558 & 0.466 & 0.035 & 0.642 & 0.437 \\
KL-SFT & 0.355 & 0.494 & 0.410 & 0.544 & 0.466 & 0.013 & 0.428 & 0.387  \\
STM & 0.333 & 0.573 & 0.494 & 0.696 & 0.590 & 0.052 & 0.568 & 0.472 \\
DFT & 0.347 & 0.543 & 0.488 & 0.622 & 0.534 & 0.046 & 0.570 & 0.450 \\
Self-SFT & 0.296 & 0.543 & 0.476 & 0.534 & 0.466 & 0.057 & 0.496 & 0.410 \\
Iter-SFT & 0.309 & 0.571 & 0.515 & 0.569 & 0.497 & 0.050 & 0.526 & 0.434 \\
Ours & 0.368 & 0.573 & 0.518 & 0.587 & 0.481 & 0.050 & 0.600 & \textbf{0.454} \\
\bottomrule
\end{tabular}%
}
\end{table*}

\section{Evaluation across the Incremental Learning Stages}
Tab.~\ref{tab:continual_learning} reports performance across three incremental adaptation stages on Qwen2.5-3B-Instruct.
At Stage~1 (iGSM), both methods achieve strong domain adaptation, while Anchored Learning preserves higher general performance than Low-SFT (0.533 vs.\ 0.496 in General Avg) and attains slightly better overall performance.
At Stage~2 (iGSM$\rightarrow$MedCalc), the gap in domain performance becomes more pronounced: Anchored Learning substantially improves medical accuracy (0.512 vs.\ 0.505) and yields a higher Domain Avg (0.577 vs.\ 0.470), while maintaining comparable general performance.
At Stage~3 (iGSM$\rightarrow$MedCalc$\rightarrow$IFEval), Anchored Learning consistently dominates Low-SFT across both domain and general metrics, achieving a markedly higher Domain Avg (0.505 vs.\ 0.331) and the best Overall Avg (0.492 vs.\ 0.459).
These results demonstrate that Anchored Learning accumulates substantially less performance degradation across successive adaptation stages, highlighting its effectiveness in mitigating compounding distributional drift under multi-stage fine-tuning.

\begin{table*}[t]
\centering
\caption{Performance comparisons across incremental learning stages (iGSM → MedCalc → IFEval) on Qwen2.5-3B-Instruct.}
\label{tab:continual_learning}
% 调整列间距，使表格更紧凑
\setlength{\tabcolsep}{3.5pt}
\resizebox{\textwidth}{!}{%
\begin{tabular}{lcccccccccccc}
\toprule
\multirow{4}{*}{\textbf{Method}} & \multicolumn{7}{c}{\textbf{General}} & \multicolumn{4}{c}{\textbf{Domain}} & \multirow{4}{*}{\shortstack{\textbf{Overall Avg.}}} \\
\cmidrule(lr){2-8} \cmidrule(lr){9-12}
 & Science & \multicolumn{4}{c}{Coding} & Math & \multirow{3}{*}{\shortstack{Gen. \\ Avg.}} & Math & Medical & Ins. Foll. & \multirow{3}{*}{\shortstack{Dom. \\ Avg.}} & \\
\cmidrule(lr){2-2} \cmidrule(lr){3-6} \cmidrule(lr){7-7} \cmidrule(lr){9-9} \cmidrule(lr){10-10} \cmidrule(lr){11-11}
 & mmlu pro & hum & hum+ & mbpp & mbpp+ & count & & iGSM & medcalc & ifeval & & \\
 & acc & pass@1 & pass@1 & pass@1 & pass@1 & acc & & acc & acc & acc & & \\
\midrule
Base & 0.419 & 0.738 & 0.677 & 0.743 & 0.622 & 0.225 & 0.571 & 0.184 & 0.133 & 0.398 & 0.238 & 0.466 \\
\midrule
\multicolumn{13}{c}{\textit{\textbf{Stage 1: iGSM}}} \\
\midrule
Low-SFT & 0.333 & 0.689 & 0.646 & 0.714 & 0.590 & 0.004 & 0.496 & 0.930 & 0.100 & 0.292 & 0.441 & 0.423 \\
Ours & 0.377 & 0.701 & 0.640 & 0.704 & 0.593 & 0.184 & 0.533 & 0.936 & 0.073 & 0.367 & 0.459 & 0.455 \\
\midrule
\multicolumn{13}{c}{\textit{\textbf{Stage 2: iGSM $\to$ MedCalc}}} \\
\midrule
Low-SFT & 0.299 & 0.646 & 0.591 & 0.677 & 0.553 & 0.003 & 0.462 & 0.622 & 0.505 & 0.284 & 0.470 & 0.448 \\
Ours & 0.308 & 0.610 & 0.579 & 0.643 & 0.545 & 0.002 & 0.448 & 0.884 & 0.512 & 0.334 & 0.577 & 0.457 \\
\midrule
\multicolumn{13}{c}{\textit{\textbf{Stage 3: iGSM $\to$ MedCalc $\to$ IFEval}}} \\
\midrule
Low-SFT & 0.368 & 0.713 & 0.652 & 0.659 & 0.577 & 0.170 & 0.523 & 0.330 & 0.157 & 0.506 & 0.331 & 0.459 \\
Ours & 0.366 & 0.707 & 0.665 & 0.704 & 0.603 & 0.061 & 0.518 & 0.700 & 0.297 & 0.519 & 0.505 & \textbf{0.492} \\
\bottomrule
\end{tabular}%
}
\end{table*}

\section{Algorithmic Details of Anchored Learning}
This section provides the full pseudocode of Anchored Learning for completeness.

Alg.~\ref{alg:anchored_learning} summarizes the iterative procedure. The algorithm takes as input a base model $p_{\mathrm{base}}$, a fixed reference model $p_{\mathrm{sft}}$, an interpolation coefficient $\alpha \in (0,1)$, and the total number of outer iterations $T$. The model is initialized from the base distribution and updated over $T$ outer iterations.

At each outer iteration $t$, an anchor distribution $q^{(t)}(\cdot \mid x)$ is first constructed by interpolating between the current model $p_{\theta^{(t)}}$ and the fixed reference model according to Eq.~(2). The model parameters are then updated by distilling toward the anchor distribution via the objective in Eq.~(3), yielding the next iterate $p_{\theta^{(t+1)}}$. The final adapted model after $T$ iterations is returned as $p_{\theta^{(T)}}$.

\begin{algorithm}[t]
\caption{Anchored Learning}
\label{alg:anchored_learning}
\begin{algorithmic}[1]
\REQUIRE Base model $p_{\mathrm{base}}$, fixed SFT model $p_{\mathrm{sft}}$, interpolation coefficient $\alpha \in (0,1)$, outer iterations $T$
\ENSURE Adapted model $p_{\theta^{(T)}}$

\STATE Initialize $p_{\theta^{(0)}} \gets p_{\mathrm{base}}$

\FOR{$t = 0,1,\ldots,T-1$}
    \STATE \textbf{Step 1: Anchor Selection via Interpolation}
    \STATE Compute $q^{(t)}(\cdot\mid x)$ according to Eq.~\ref{eq:anchor_general}

 \STATE \textbf{Step 2: Anchor Fitting via Distillation}
    \STATE Update $\theta^{(t+1)}$ according to Eq.~\ref{eq:inner_argmin}
    
\ENDFOR

\STATE \textbf{return} $p_{\theta^{(T)}}$
\end{algorithmic}
\end{algorithm}

\section{Case Study}
\label{app:case_study}

To demonstrate the effectiveness of our method in preventing catastrophic forgetting, we present a case study involving a general arithmetic reasoning task ("Make 78"). The models were fine-tuned on the \textbf{iGSM dataset}, a domain-specific corpus focused on solving linear equations involving variables (e.g., expressions similar to $x + y + z$).

As shown in Figure~\ref{fig:math_case_study}, the \textbf{Base Model} successfully leverages its pre-trained general reasoning capabilities to deduce the correct arithmetic combination ($77 + 56 - 55 = 78$).

The behavior of the \textbf{Low-SFT} model highlights the impact of \textit{negative transfer} from the specific domain. Although the model remains compliant with the output format instructions (producing a boxed answer), its internal reasoning is heavily biased by the SFT data distribution. Instead of performing the necessary arithmetic search, the model mimics the syntactic patterns of the iGSM dataset—defining variables and constructing rigid linear equations—which leads to a logical failure on this general reasoning task.

In contrast, \textbf{Anchored Learning} effectively mitigates this interference. It resists the tendency to overfit to the specific stylistic patterns of the iGSM data, thereby retaining the robust reasoning capabilities of the Base Model and correctly solving the problem.

\begin{figure*}[h!]
    \centering
    \begin{tcolorbox}[
        colback=gray!5,
        colframe=gray!50,
        title=\textbf{Input Prompt},
        fonttitle=\bfseries,
        sharp corners=south,
        boxrule=0.5mm
    ]
    In this task, you need to use a list of numbers x = (55, 56, 77) to create an equation that leads up to the target number y = 78 using the basic arithmetic operations (+, -, *, /), and each number can only be used once. Think and return the final answer in $\boxed{ }$. For example, with x = (1, 2, 3) and y = 1, your answer should be $\boxed{(1 + 2) / 3}$.
    \end{tcolorbox}
    
    \vspace{-0.3cm}
    
    % Base Model Response
    \begin{tcolorbox}[
        colback=white,
        colframe=blue!40!black,
        title=\textbf{Base Model Response (Pre-trained)},
        fonttitle=\bfseries\small,
        sharp corners,
        boxrule=0.3mm
    ]
    \small
    To solve the problem for the given target values $x = (55, 56, 77)$ and $y = 78$, we need to construct an equation using the numbers 55, 56, and 77 exactly once...
    
    [...Step-by-step derivation omitted for brevity...]
    
    Consider the equation $77 + 56 - 55$:
    First, calculate $77 + 56 = 133$. Then, subtract 55 from 133: $133 - 55 = 78$.
    
    Therefore, the final answer is:
    $\boxed{77 + 56 - 55}$ \textcolor{green!60!black}{\textbf{\checkmark (Correct)}}
    \end{tcolorbox}
    
    \vspace{-0.3cm}

    % Baseline Model Response
    \begin{tcolorbox}[
        colback=red!5,
        colframe=red!60!black,
        title=\textbf{Baseline Model Response (Low-SFT)},
        fonttitle=\bfseries\small,
        sharp corners,
        boxrule=0.3mm
    ]
    \small
    Define x as a list, x = [55, 56, 77]; Define y as 78; Define some variables as V; Define the equation as: y = (x[0] + x[1]) * x[2] = 78; The final answer is \boxed{y}. 
    
    \vspace{0.2cm}
    \textit{\textbf{Analysis:} The model fails to solve the math problem. It hallucinates a coding structure and incorrect logic ($(55+56)*77 \neq 78$), indicating catastrophic forgetting of general arithmetic reasoning.} \textcolor{red!60!black}{\textbf{\xmark (Incorrect)}}
    \end{tcolorbox}
    
    \vspace{-0.3cm}

    % Our Model Response
    \begin{tcolorbox}[
        colback=green!5,
        colframe=green!50!black,
        title=\textbf{Our Model Response (Anchored Learning)},
        fonttitle=\bfseries\small,
        sharp corners=north,
        boxrule=0.3mm
    ]
    \small
    To solve for the target number 78 using the numbers 55, 56, and 77 with the basic arithmetic operations, we need to strategically apply the operations. Let's break it down step by step:
    
    1. Start with the given numbers: 55, 56, and 77.
    2. Notice that 77 is close to 78, so we can use 77 directly and adjust it to 78 by adding 1.
    3. To get 1, we can use $56 - 55 = 1$.
    4. So, the equation is $77 + (56 - 55) = 78$.
    
    Therefore, the final answer is $\boxed{77 + 56 - 55}$. \textcolor{green!60!black}{\textbf{\checkmark (Correct)}}
    \end{tcolorbox}
    
    \caption{\textbf{Case Study on Countdown Task.} 
    % Comparison of model outputs on an arithmetic reasoning task. 
    While the Baseline model loses its ability to perform basic math (likely due to overfitting on the domain-specific fine-tuning data), our method preserves the reasoning capabilities inherent in the Base Model.}
    \label{fig:math_case_study}
\end{figure*}

%%%%%%%%%%%%%%%%%%%%%%%%%%%%%%%%%%%%%%%%%%%%%%%%%%%%%%%%%%%%%%%%%%%%%%%%%%%%%%%
%%%%%%%%%%%%%%%%%%%%%%%%%%%%%%%%%%%%%%%%%%%%%%%%%%%%%%%%%%%%%%%%%%%%%%%%%%%%%%%

\end{document}